\documentclass[11pt]{article}

\usepackage[final]{acl}

\usepackage{times}
\usepackage{latexsym}
\usepackage[T1]{fontenc}
\usepackage[utf8]{inputenc}
\usepackage{microtype}
\usepackage{inconsolata}
\usepackage{graphicx}
\usepackage{float}
\usepackage{adjustbox}
\newcommand{\fiteq}[1]{\adjustbox{max width=0.92\linewidth}{$\displaystyle #1$}}

\usepackage{amsmath}
\usepackage{amssymb}
\usepackage{mathtools}
\usepackage{amsthm}
\usepackage{booktabs}
\usepackage{multirow}
\usepackage{subcaption}
\usepackage{colortbl}
\usepackage{xcolor}
\usepackage{nicefrac}
\usepackage{siunitx}
\usepackage{bbding}
\usepackage{pifont}
\usepackage{algorithm}
\usepackage{algorithmic}

\theoremstyle{plain}
\newtheorem{theorem}{Theorem}[section]
\newtheorem{proposition}[theorem]{Proposition}

\theoremstyle{definition}
\newtheorem{definition}[theorem]{Definition}

\theoremstyle{remark}

\usepackage[capitalize,noabbrev]{cleveref}

\title{FluxBin: Flexible LUT-based Ultra-low-bit LLM Inference by Algorithm-Kernel Synergy}

\author{
 \textbf{Qingyao Yang\textsuperscript{1}},
 \textbf{Runming Yang\textsuperscript{1}},
 \textbf{He Xiao\textsuperscript{1}},
 \textbf{Wendong Xu\textsuperscript{1}},
\\
 \textbf{Junyu Chen\textsuperscript{1}},
 \textbf{Haobo Liu\textsuperscript{1}},
 \textbf{Chenchen Ding\textsuperscript{1}},
 \textbf{Ruihan Hu\textsuperscript{2}},
 \textbf{Yik-Chung Wu\textsuperscript{1}},
 \textbf{Ngai Wong\textsuperscript{1}}
\\
\\
 \textsuperscript{1}The University of Hong Kong,
 \textsuperscript{2}Harbin Institute of Technology
\\
 \small{
   \textbf{Correspondence:} \href{nwong@eee.hku.hk}{nwong@eee.hku.hk}
 }
}

\begin{document}
\maketitle

\begin{abstract}
While binary quantization theoretically promises extreme compression and acceleration for Large Language Models (LLMs), existing research often overlooks the necessity of specialized hardware kernels, thus failing to unleash the full acceleration potential due to persistent reliance on expensive floating-point arithmetic or runtime dequantization overheads. To bridge this gap, we propose FluxBin (\textbf{F}lexible \textbf{L}UT-based \textbf{U}ltra-low-bit e\textbf{X}ecution with \textbf{Bin}ary bases), an algorithm-kernel co-design that synergizes post-training quantization with a highly optimized CUDA kernel. Algorithmically, we introduce Decoupled Row-Column Binary Decomposition to enhance representational capacity while maintaining hardware efficiency, complemented by a Hessian-guided saliency-aware hybrid bases that preserve critical information. At the kernel level, we implement a Lookup Table Building Approach with Scale Fusion to reduce floating-point arithmetic, featuring a Virtual Columnar Mapping that transforms irregular, sparse, and salient matrices into dense execution. Extensive evaluations demonstrate FluxBin achieves up to $5.92\times$ speedup and $10.19\times$ energy savings across diverse model architectures, delivering comparable accuracy to heavily fine-tuned methods.
This effectively enables the deployment of 70B-scale models on one single A100 GPU with a $4\times$ memory reduction. Code is available at \url{https://github.com/nicyyyy/FluxBin}.
\end{abstract}

\section{Introduction}
\label{sec:intro}

While Large Language Models (LLMs) demonstrate unprecedented capabilities~\cite{gpt3, llama2}, their massive scale imposes severe storage and bandwidth constraints~\cite{damf}. Post-Training Quantization (PTQ)~\cite{gptq, awq, leiq} has thus emerged as a standard solution for reducing bit-width without retraining. Among these strategies, {binary quantization}~\cite{billm, xnor-net_2016} pushes compression to the limit, offering maximal theoretical efficiency. 

To mitigate the inherent accuracy degradation of 1-bit representations, recent works like QBB~\cite{QBB}, PTQTP~\cite{ptqtp}, and DB-LLM~\cite{DB-LLM} adopt {multi-binary base quantization}, which approximates weights as combinations of binary matrices to recover representational capacity. However, despite this theoretical promise and improved fidelity, existing approaches still struggle to translate binary representations into practical end-to-end inference speedups.

\textbf{Limitation 1: Failure to exploit multiplication free hardware potential.} While binary representations theoretically allow replacing expensive floating-point operations with efficient bitwise logic, most current binary quantization methods fail to translate this into practical end-to-end acceleration. Methods such as BiLLM~\cite{bi}, ARB-LLM~\cite{ARB-LLM} and HB-LLM~\cite{HBLLM} etc. often prioritize algorithmic fidelity (e.g., simulated quantization) and rely on dequantization to FP16 during inference, leaving the hardware advantages of bitwise computation largely untapped.

\textbf{Limitation 2: Trade-off between saliency granularity and hardware efficiency.} 
Current strategies fail to balance fine-grained preservation with execution efficiency. Unstructured methods~\cite{pb-llm, SpQR, Llm-mq} suffer from irregular memory access and metadata overhead due to element-wise sparsity. Conversely, coarse-grained approaches like {SLiM-LLM}~\cite{slim-llm} promote entire groups to maintain structure, causing {bit redundancy} by unnecessarily preserving non-salient elements at high precision. Thus, a strategy that achieves fine-grained saliency adaptation and hardware efficiency remains absent.

\textbf{Limitation 3: Implementation inefficiencies in low-bit kernels.}
The autoregressive generation phase of LLMs is intrinsically memory-bound, restricting hardware throughput even with INT8 precision~\cite{int8NVIDIA}. Although recent W4A16 quantization methods~\cite{zeng2023glm130, gptq} mitigate memory bottlenecks by compressing weights, they introduce significant runtime overhead due to dynamic dequantization, as weights must be converted to FP16 before matrix multiplication, which does not truly exploit the \textit{addition operation} properties during inference.
Look-Up Tables (LUT)~\cite{lut-gemm, lut-nn,BiQGEMM, UNPU, maleki2023lookupmaigemmincreasing} theoretically offer a multiplication-free alternative by replacing arithmetic with memory lookups. Existing LUT kernels fail to handle the irregular memory access introduced by mixed-precision, and struggle to achieve high performance and low latency in ultra-low bit PTQ settings.

To address the aforementioned limitations simultaneously and bridge the gap between ideal model compression schemes and real-world hardware inference efficiency. We propose {FluxBin}, an algorithm-kernel co-designed framework that works in tandem with flexible and high-performance PTQ method and CUDA kernels. 
\textbf{\textit{At the algorithm level}}, to reconcile binary efficiency with capacity, we propose {Decoupled Row-Column Binary Decomposition} for fine-grained distribution capture. Then, our Hessian-guided {Salient-Aware Hybrid Bases} (mixed  precision) preserve fidelity by selectively allocating additional bases to critical features. Besides, the proposed method is pure PTQ without any fine-tuning~\cite{ft} or distillation~\cite{polino2018model, 8578384}.
\textbf{\textit{At the kernel level}}, our {LUT Building with Scale Fusion (LUT-BSF)} reduces floating-point overhead in LUT building. Complementing this, {Virtual Columnar Mapping (VCM)} converts sparse salient columns into dense storage, enabling unified execution for hybrid branches. Our contributions are:
\begin{itemize}

    \item FluxBin entails a pure PTQ framework, with decoupled row-column binary base and Hessian-guided hybrid base. This maintains the hardware efficiency and competitive accuracy. 

    \item FluxBin designs a highly optimized LUT-based CUDA kernel with LUT-BSF and VCM strategy. This synergy enables high-fidelity hybrid precision and turns binary based PTQ into real inference advantage.
    
    \item FluxBin bridges the algorithm-kernel gap through a custom LUT-based kernel that bypasses de-quantization overhead, converting multiplication-free logic into practical acceleration. It achieves up to \(5.92\times\) speedup and \(10.19\times\) energy saving.
\end{itemize}

\section{Related Work}
\textbf{Binary Quantization.}
Standard binary quantization aims to approximate a full-precision weight matrix \(\mathbf{W} \in \mathbb{R}^{M \times N}\) using a binary matrix \(\mathbf{B} \in \{-1, +1\}^{M \times N}\) scaled by a floating-point factor \(\boldsymbol{\alpha} \in \mathbb{R}\)~\cite{BiBench}. The optimization objective is to minimize the reconstruction error under the Frobenius norm:
\begin{equation}
\small
\min_{\boldsymbol{\alpha}, \mathbf{B}} \| \mathbf{W} - \boldsymbol{\alpha} \mathbf{B} \|_F^2
\label{equa:basic-binary}
\end{equation}
The optimal solution to Eq.~\eqref{equa:basic-binary} typically yields \(\mathbf{B} = \text{sign}(\mathbf{W})\) and \(\boldsymbol{\alpha} = \frac{1}{MN} \|\mathbf{W}\|_1\), as derived in prior works like XNOR-Net~\cite{xnor-net_2016}. Regarding the representative works, ARB-LLM~\cite{ARB-LLM} introduces column-wise grouping
and alternating refined binarization. HB-LLM~\cite{HBLLM} applies distinct binarization strategies to frequency components separated by wavelet decomposition.

\textbf{Binary Base Approximation.}
Since a single binary basis (1-bit) has limited representation capacity, high order binary base quantization methods approximate \(\mathbf{W}\) using \(K\) binary bases:
\begin{equation}
\small
\mathbf{W} \approx \sum_{k=1}^{K} \boldsymbol{\alpha}_k \mathbf{B}^{(k)}
\label{equa:multi-base}
\end{equation}

To minimize this quantization error, {LQ-Nets}~\cite{LQ-Nets} proposes to jointly train the quantization basis and the binary encodings. This allows for an arbitrary basis that better fits the data distribution. Approaches such as {QBB}~\cite{QBB} solve this via a greedy residual strategy. Let \(\mathbf{R}^{(0)} = \mathbf{W}\). At the \(k\)-th step, parameters are optimized to approximate the current residual, updating \(\mathbf{R}^{(k)} = \mathbf{R}^{(k-1)} - \alpha_k \mathbf{B}^{(k)}\). DB-LLM~\cite{DB-LLM} approximates weights by primary and secondary residual binary base to refine the quantization errors. %

\begin{figure}[ht]
    \centering
    \includegraphics[width=0.8\linewidth]{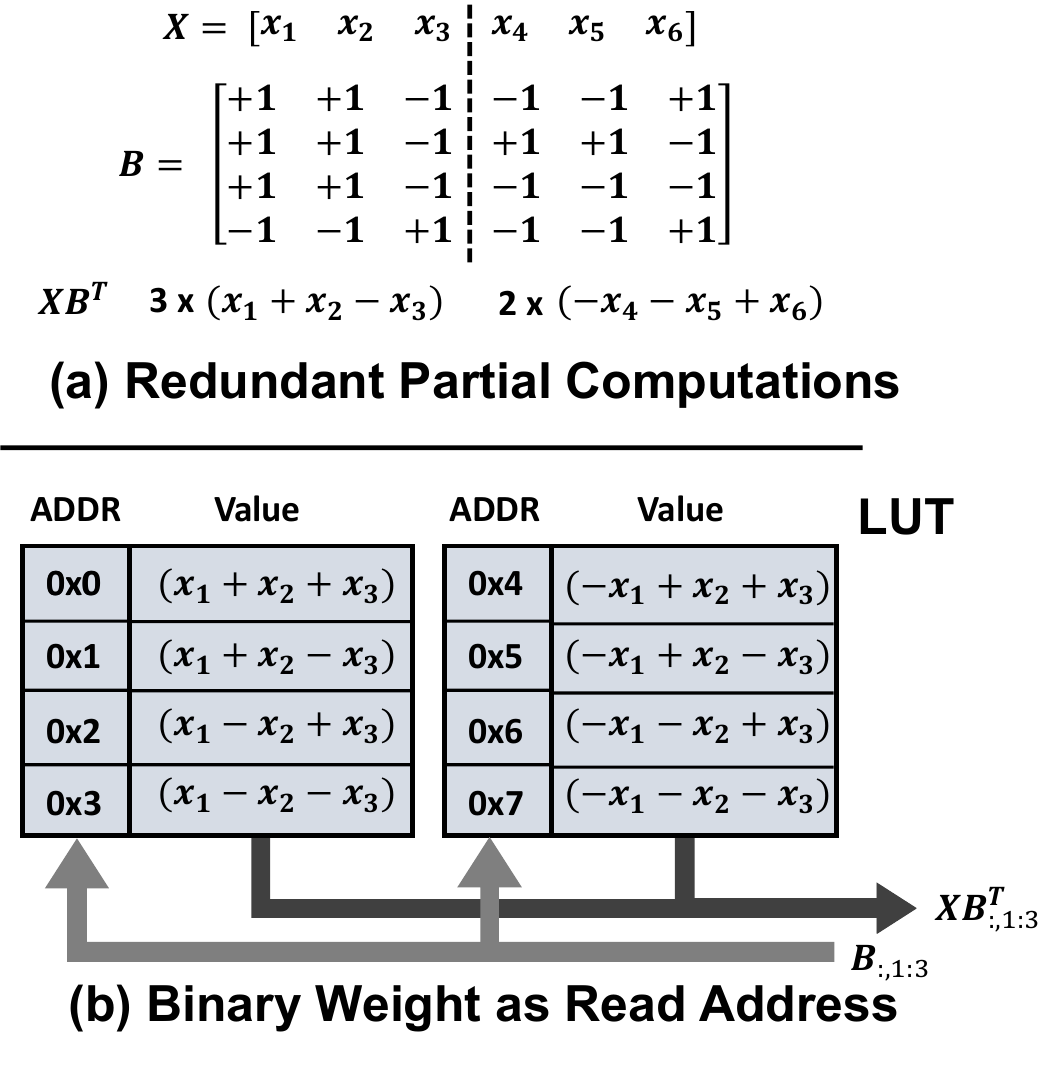}
    \vspace{-0.5em}
    \caption{Illustration of LUT-based Multiplication.}
    \label{fig:lut-principle}
\end{figure}

\begin{figure*}[t]
    \centering
    \includegraphics[width=0.6\linewidth]{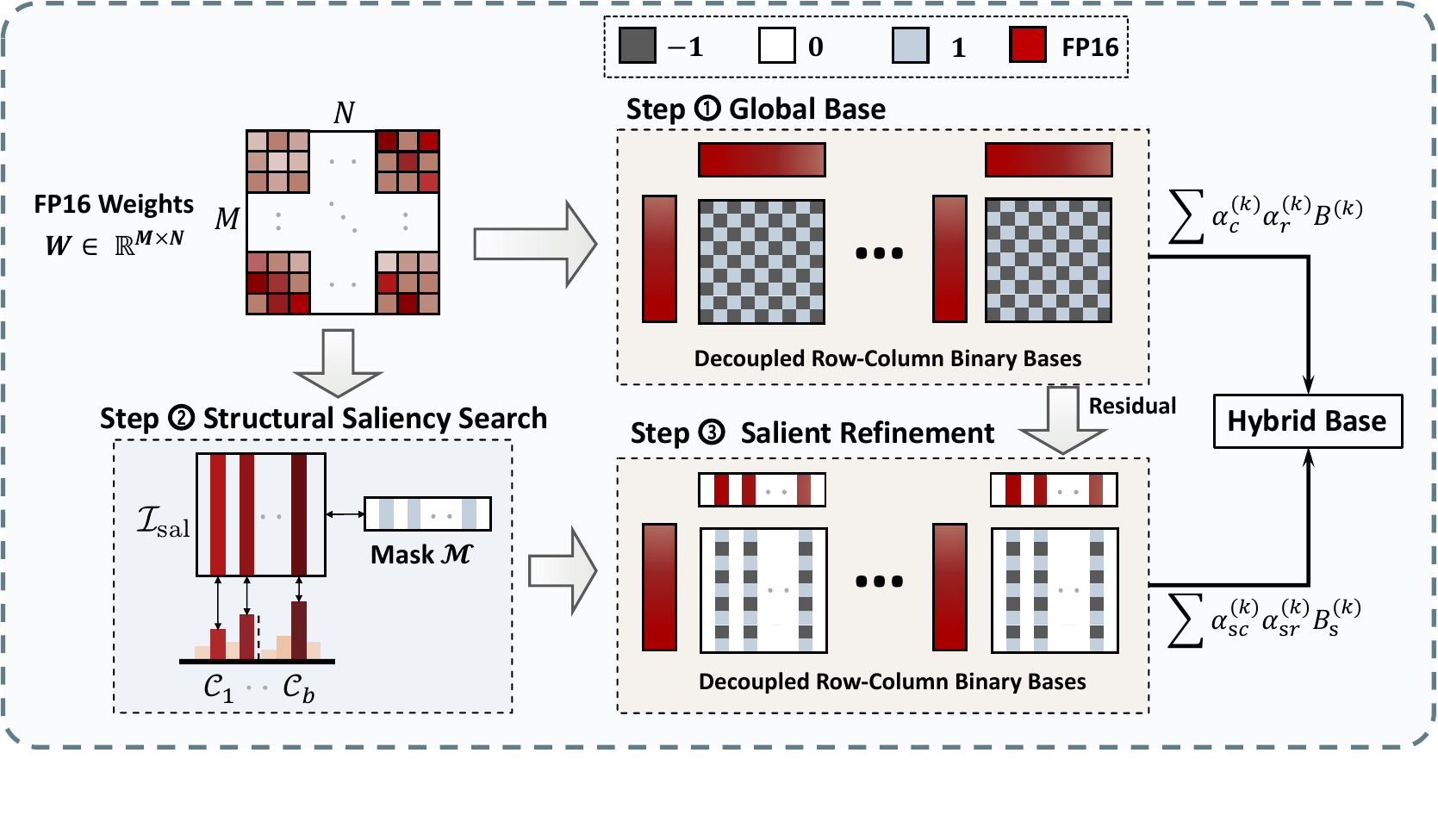}
    \vspace{-1.0em}
    \caption{The framework of FluxBin Binary Bases Quantization.}
    \label{fig:quant-method}
\end{figure*}

\textbf{LUT-based Multiplication.}
Standard matrix-vector multiplication (MVM) with binary weights \(\mathbf{B} \in \{-1, +1\}^{M \times N}\) often incurs significant computational redundancy. As illustrated in Figure~\ref{fig:lut-principle}(a), identical weight sub-patterns (e.g., \([+1, +1, -1]\)) frequently repeat across different rows. Consequently, the dot product for the same input sub-vector is calculated multiple times redundantly.

To address this, {LUT-GEMM}~\cite{lut-gemm} proposes replacing arithmetic operations with efficient memory lookups. The input dimension is partitioned into sub-vectors of size \(\mu\) (e.g., \(\mu=3\) in the figure). For each input sub-vector \(\mathbf{X}_{sub}\), all \(2^\mu\) possible linear combinations are pre-computed and stored in a LUT. As shown in Figure~\ref{fig:lut-principle}(b), during inference, the \(\mu\)-bit binary weight pattern serves directly as the memory address to fetch the pre-computed partial sum, which is much faster than carrying out the original calculations~\cite{9478922, BiQGEMM}. Furthermore, recent work LUT Tensor Core~\cite{LUT-tensor-core} introduced an instruction set, with an elaborated precompute and tiling strategy, to build LUT-based accelerators. While LUT-GEMM provides an efficient kernel for uniform quantization, it does not provide PTQ algorithm adaptation, mixed-precision configurations, or end-to-end inference pipelines.

\section{Methodology}

\subsection{FluxBin Quantization}
We present the mathematical formulation of FluxBin quantization, visualized in Figure~\ref{fig:quant-method}, and the pseudo-code of this quantization method is described in Sec.~\ref{sec:algo_desc}.

\textbf{Decoupled Row-Column Binary Bases.}
We approximate dense weights \( \mathbf{W} \in \mathbb{R}^{M \times N} \) via linear combinations of binary bases. Distinct from prior row-scaling-only methods, we incorporate column-wise scaling to capture anisotropic weight distributions.

\begin{definition}[Row-Column Binary Decomposition]
\label{def:rc-decomp}
For decomposition order \(k\), the approximation \(\hat{\mathbf{W}}^{(k)}\) is defined as the Hadamard product of scaling vectors and a binary base:
\begin{equation}
\small
\hat{\mathbf{W}}^{(k)} = (\boldsymbol{\alpha}_{r}^{(k)} \boldsymbol{\alpha}_{c}^{(k)\top}) \odot \mathbf{B}^{(k)}
\label{equa:rc-decompoes}
\end{equation}
where \( \mathbf{B}^{(k)} \in \{-1, +1\}^{M \times N} \) is the binary base, and \( \boldsymbol{\alpha}_{r}^{(k)} \in \mathbb{R}^{M} \), \( \boldsymbol{\alpha}_{c}^{(k)} \in \mathbb{R}^{N} \) denote the row and column scaling vectors.
\end{definition}

The final quantized matrix sums over all orders: \( \hat{\mathbf{W}} = \sum_{k=1}^{K_b} \hat{\mathbf{W}}^{(k)}\). Consistent with \cite{gptq, billm}, we assign one row scale \(\boldsymbol{\alpha}_{r}^{(k)}\) per weight group.

\textbf{Parameter Optimization.}
We optimize parameters \(\{\boldsymbol{\alpha}_r, \boldsymbol{\alpha}_c, \mathbf{B}\}\) by minimizing the reconstruction error:
\begin{equation}
\small
\mathcal{L}=\left\|\mathbf{W}-\left(\boldsymbol{\alpha}_r \boldsymbol{\alpha}_c^\top\right) \odot \mathbf{B}\right\|_F^2
\label{equa:loss}
\end{equation}
Employing alternating minimization, fixing \(\mathbf{B}\) and one scaling vector reduces the problem to weighted least squares.

\begin{proposition}[Optimal Scaling Factors]
Given fixed \(\mathbf{B}\) and one scaling vector, the closed-form solutions minimizing \(\mathcal{L}\) are:
\vspace{-0.5em}
\begin{equation}
\small
\begin{aligned}
{\boldsymbol\alpha}^{(k)}_{r, i}&=\frac{\sum_{j}\left(\boldsymbol{\alpha}^{(k)}_{c, j} \mathbf{B}^{(k)}_{ij}\right) \mathbf{W}_{ij}}{\sum_{j}\left({\boldsymbol\alpha}^{(k)}_{c, j} \mathbf{B}^{(k)}_{ij}\right)^2+\epsilon}, \\
{\boldsymbol\alpha}^{(k)}_{c, j}&=\frac{\sum_{i}\left({\boldsymbol\alpha}^{(k)}_{r, i} \mathbf{B}^{(k)}_{ij}\right) \mathbf{W}_{ij}}{\sum_{i}\left({\boldsymbol\alpha}^{(k)}_{r, i} \mathbf{B}^{(k)}_{ij}\right)^2+\epsilon}
\end{aligned}
\label{equa:row-scale-update}
\end{equation}
where $\epsilon$ is a stability constant.
\end{proposition}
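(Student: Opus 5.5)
The approach is to notice that, once $\mathbf{B}^{(k)}$ and one scaling vector are fixed, the loss $\mathcal{L}$ in Eq.~\eqref{equa:loss} separates across rows (or columns). Each piece is then a one-dimensional least-squares problem in a single scalar. Concretely, write
\begin{equation*}
\mathcal{L}=\sum_{i}\sum_{j}\big(\mathbf{W}_{ij}-\boldsymbol{\alpha}_{r,i}\boldsymbol{\alpha}_{c,j}\mathbf{B}_{ij}\big)^2 ,
\end{equation*}
dropping the superscript $(k)$, or taking $\mathbf{W}$ to be the current residual when $k>1$. With $\boldsymbol{\alpha}_c$ fixed, the $i$-th summand over $j$ depends only on the scalar $\boldsymbol{\alpha}_{r,i}$. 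Minimizing $\mathcal{L}$ over $\boldsymbol{\alpha}_r$ therefore reduces to $M$ independent problems
\begin{equation*}
\min_{a\in\mathbb{R}} f_i(a),\qquad f_i(a)=\sum_j\big(\mathbf{W}_{ij}-a\,v_{ij}\big)^2,\qquad v_{ij}=\boldsymbol{\alpha}_{c,j}\mathbf{B}_{ij}.
\end{equation*}
The column case is symmetric: swap the roles of $i$ and $j$, with $u_{ij}=\boldsymbol{\alpha}_{r,i}\mathbf{B}_{ij}$.

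Each $f_i$ is a convex quadratic in $a$ with leading coefficient $\sum_j v_{ij}^2\ge 0$. The derivative is $f_i'(a)=-2\sum_j v_{ij}\mathbf{W}_{ij}+2a\sum_j v_{ij}^2$. Setting it to zero gives the unique minimizer $a^\star=\sum_j v_{ij}\mathbf{W}_{ij}\big/\sum_j v_{ij}^2$ whenever the denominator is positive, and convexity makes this stationary point the global minimum. Substituting $v_{ij}$ yields the first line of Eq.~\eqref{equa:row-scale-update}; the same computation gives the second line.

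The only step needing care is the stabilizer $\epsilon$, since the exact minimizer has no $\epsilon$. I would treat this by showing that the displayed formula is the exact minimizer of the ridge-regularized objective $f_i(a)+\epsilon a^2$, because that objective's normal equation is $a(\sum_j v_{ij}^2+\epsilon)=\sum_j v_{ij}\mathbf{W}_{ij}$. It therefore coincides with the unregularized minimizer as $\epsilon\to0$. In the degenerate case $\sum_j v_{ij}^2=0$, all $v_{ij}=0$, so the numerator vanishes and $f_i$ is constant; the formula then returns $a=0$, which is a valid minimizer. This makes the $\epsilon$-version well defined in all cases.

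Because $\mathbf{B}_{ij}^2=1$, the denominator simplifies to $\sum_j\boldsymbol{\alpha}_{c,j}^2$. This means it is independent of the row whenever all entries of $\mathbf{B}$ are $\pm1$. I would note this simplification in passing but keep the general form, so the proof also covers masked or hybrid bases where some entries are zero.
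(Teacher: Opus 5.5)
Your proposal is correct and follows essentially the same route as the paper's proof: expand the Frobenius norm elementwise, note that with $\mathbf{B}$ and $\boldsymbol{\alpha}_c$ fixed the problem splits into $M$ independent scalar least-squares problems, set the derivative to zero, and get the column update by symmetry. Your additions are extra rigor that the paper leaves implicit, not a change of approach: convexity to certify that the stationary point is the global minimum, reading $\epsilon$ as the exact minimizer of a ridge-regularized objective rather than an ad hoc stabilizer, and handling of the zero-denominator case.
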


For \(K_b \geq 2\), we perform joint optimization rather than greedy sequential extraction. Defining the magnitude tensor \(\mathbf{M}^{(k)}=\boldsymbol{\alpha}_{r}^{(k)} \boldsymbol{\alpha}_{c}^{(k)\top}\), we solve for the optimal binary signs:

\begin{proposition}[Optimal Binary Base Assignment]
The binary combination \(\mathbf{b} = [b_1, \dots, b_{K_b}]^\top\) minimizing the local error at \((i,j)\) is:
\begin{equation}
\small
\mathbf{B}_{i j}^{(1: K_b)}=\underset{\mathbf{b} \in\{-1,+1\}^{K_b}}{\operatorname{argmin}}\left|\mathbf{W}_{i j}-\sum_{k=1}^{K_b} \mathbf{M}_{i j}^{(k)} \cdot b_k\right|
\label{equa:B-update}
\end{equation}
\end{proposition}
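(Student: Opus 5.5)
With the magnitudes $\mathbf{M}^{(k)}=\boldsymbol{\alpha}_r^{(k)}\boldsymbol{\alpha}_c^{(k)\top}$ held fixed, the plan is to show that the objective separates over matrix entries. Then the joint minimization over all binary bases reduces to $MN$ independent problems, each over the finite set $\{-1,+1\}^{K_b}$. Concretely, by Definition~\ref{def:rc-decomp} and $\hat{\mathbf{W}}=\sum_k \hat{\mathbf{W}}^{(k)}$, the multi-base version of the loss in Eq.~\eqref{equa:loss} is
\begin{equation}
\small
\mathcal{L}=\sum_{i=1}^{M}\sum_{j=1}^{N}\Big(\mathbf{W}_{ij}-\sum_{k=1}^{K_b}\mathbf{M}^{(k)}_{ij}\mathbf{B}^{(k)}_{ij}\Big)^2 .
\end{equation}
The Hadamard product acts entrywise, so the $(i,j)$ summand depends only on the $K_b$ signs $\mathbf{B}^{(1)}_{ij},\dots,\mathbf{B}^{(K_b)}_{ij}$. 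These sign vectors are disjoint across entries, and the constraint set $\{-1,+1\}^{K_b\times M\times N}$ is a Cartesian product of per-entry copies of $\{-1,+1\}^{K_b}$.

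The key steps are as follows.
\begin{enumerate}
\item Write $\mathcal{L}=\sum_{i,j} f_{ij}(\mathbf{b}_{ij})$, where $\mathbf{b}_{ij}=(\mathbf{B}^{(k)}_{ij})_{k=1}^{K_b}$ and $f_{ij}(\mathbf{b})=\big(\mathbf{W}_{ij}-\sum_k \mathbf{M}^{(k)}_{ij}b_k\big)^2$.
\item Invoke the elementary fact that the minimum of a sum of functions of independent variables over a product domain is the sum of the individual minima. It is attained exactly when each $\mathbf{b}_{ij}$ minimizes its own $f_{ij}$.
\item Note that each $f_{ij}$ is the square of a real number. Since $t\mapsto t^2$ is strictly increasing on $[0,\infty)$, minimizing $f_{ij}$ is equivalent to minimizing $\big|\mathbf{W}_{ij}-\sum_k \mathbf{M}^{(k)}_{ij}b_k\big|$.
\item Conclude that a minimizer exists and equals the stated argmin, since the domain $\{-1,+1\}^{K_b}$ has $2^{K_b}$ elements. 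If several $\mathbf{b}$ tie, any choice is optimal, so the argmin is read as any element of the minimizing set.
\end{enumerate}

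The main point requiring care is the scope of the claim, because the proposition is easy to misread as global optimality. It asserts optimality only conditional on fixed $\mathbf{M}^{(k)}$, i.e.\ as the $\mathbf{B}$-step of the alternating minimization, consistent with the preceding proposition. I would state explicitly that the scales are held constant, and add a remark on what follows. Combined with the closed-form scale updates in Eq.~\eqref{equa:row-scale-update}, each alternating step is non-increasing in $\mathcal{L}$ (up to the $\epsilon$ regularization).

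The main obstacle is not technical but definitional. The paper only defines $\mathcal{L}$ for a single base, so the multi-base objective must first be written out as above. It must also be made clear that the signs are optimized jointly across $k$ at each entry, not greedily. This is precisely what distinguishes this proposition from residual schemes like QBB.
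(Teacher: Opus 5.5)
Your proposal is correct and follows essentially the same route as the paper's proof. Both arguments write the multi-base Frobenius loss as a sum of entrywise terms, use the independence of the sign vectors across $(i,j)$ to split the joint minimization into per-entry problems over $\{-1,+1\}^{K_b}$, and then pass from squared to absolute error by monotonicity of $t\mapsto t^2$ on $[0,\infty)$. You are more careful than the paper about existence (finite domain) and ties. Your side remark on monotone descent lies outside the proposition, and for $K_b\ge 2$ it holds only if the scale updates are applied to the residual $\mathbf{W}-\sum_{l\neq k}\mathbf{M}^{(l)}\odot\mathbf{B}^{(l)}$ rather than to $\mathbf{W}$ itself.
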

Implementation-wise, we exhaustively evaluate all \(2^{K_b}\) sign combinations to ensure optimality.

\textbf{Structural Saliency Search.}
Recognizing that weight elements contribute unequally to performance, we adopt a hybrid base strategy guided by structural saliency. Leveraging the Optimal Brain Quantization (OBQ) framework~\cite{OBQ}, we approximate weight sensitivity via the inverse Hessian \(\mathbf{H}^{-1}\), and this can be efficiently calculated through Cholesky decomposition~\cite{6710599}. Given the column-wise clustering of sensitive weights in LLMs~\cite{billm}, we aggregate element-wise errors into a cumulative column score, following previous works~\cite{slim-llm, SparseGPT}.

\begin{definition}[Hessian-based Column Saliency]
Following SLiM-LLM~\cite{slim-llm}, the column-wise saliency score \(\mathbf{S}_{j}\) is defined as:
\begin{equation}
\small
\mathbf{S}_{j}=\sum\limits_{i=1}^{M}\frac{\mathbf{W}_{ij}^2}{\left[\mathbf{H}^{-1}\right]_{j j}^2}
\label{equa:saliency-score-column}
\end{equation}
\end{definition}

Columns with higher \(\mathbf{S}_{j}\) usually contain outliers. To ensure load balancing, we partition columns into groups of size \(g\). Within each group \(\mathcal{C}_b\), we select a subset \(\mathcal{I}^{(b)}_{\text{sal}}\) of size \(s\) (the number of salient columns per group) maximizing total saliency:
\begin{equation}
\small
\mathcal{I}^{(b)}_{\text{sal}} = \mathop{\arg\max}\limits_{\mathcal{I} \subset \mathcal{C}_b, |\mathcal{I}|=s} \sum_{j \in \mathcal{I}} \mathbf{S}_{j}
\label{equa:saliency-column-idx}
\end{equation}
The aggregated set \(\mathcal{I}_{\text{sal}} = \bigcup_b \mathcal{I}^{(b)}_{\text{sal}}\) (with \(|\mathcal{I}_{\text{sal}}| = N_{\text{sal}} = (N/g)\,s\) salient columns in total) defines the binary mask \(\mathcal{M}\) (\(\mathcal{M}_{j}=1\) if \(j \in \mathcal{I}_{\text{sal}}\), else 0).

\textbf{Salient-Aware Hybrid Bases.}
The final quantized weight \( \hat{\mathbf{W}} \) combines a Global Base Approximation and a Salient Refinement (Figure~\ref{fig:quant-method}(b)):
\begin{equation}
\small
\hat{\mathbf{W}}=\underbrace{\sum_{k=1}^{K_{b}}\mathcal{Q}(\mathbf{W};\mathbf{\Theta}_{k})}_{\text{Global Base}}+\underbrace{\sum_{k=1}^{K_s}\mathcal{Q}(\mathbf{R}_\text{base};\mathbf{\Phi}_{k}) \odot \mathcal{M}}_{\text{Salient Refinement}}
\label{equa:hybrid-base}
\end{equation}
Here, \(\mathcal{Q}(\cdot)\) denotes the quantization function (Def.~\ref{def:rc-decomp}), with parameter sets \(\mathbf{\Theta}_{k}=\{\mathbf{B}^{(k)},\boldsymbol{\alpha}_{r}^{(k)},\boldsymbol{\alpha}_{c}^{(k)}\}\) for the global branch and \(\mathbf{\Phi}_{k}=\{\mathbf{B}_{s}^{(k)},\boldsymbol{\alpha}_{sr}^{(k)},\boldsymbol{\alpha}_{sc}^{(k)}\}\) for the salient branch. \(\mathbf{R}_\text{base}\) represents the residual of the global approximation: \(\mathbf{R}_\text{base} = \mathbf{W} - \sum_{k}\mathcal{Q}(\mathbf{W};\mathbf{\Theta}_{k})\).

\subsection{FluxBin LUT-based Kernel}
Figure \ref{fig:kernel-overview} illustrates the FluxBin LUT-based GEMV kernel architecture, comprising three stages: \textbf{(i)} LUT Building with Scale Fusion to remove column scaling \(\boldsymbol{\alpha}_c\) from the critical path; \textbf{(ii)} LUT reading for Multiplication-free GEMV, utilizing bitwise operations
for high throughput; and \textbf{(iii)} Reduction with Row-Scale, applying row scaling \(\boldsymbol{\alpha}_r\) and performing reduction before writing back to global memory. Crucially, our Virtual Columnar Mapping (VCM) allows unified kernel execution. 

\begin{figure*}[t]
    \centering
    \begin{subfigure}[t]{0.6\linewidth}
        \centering
        \includegraphics[width=\linewidth]{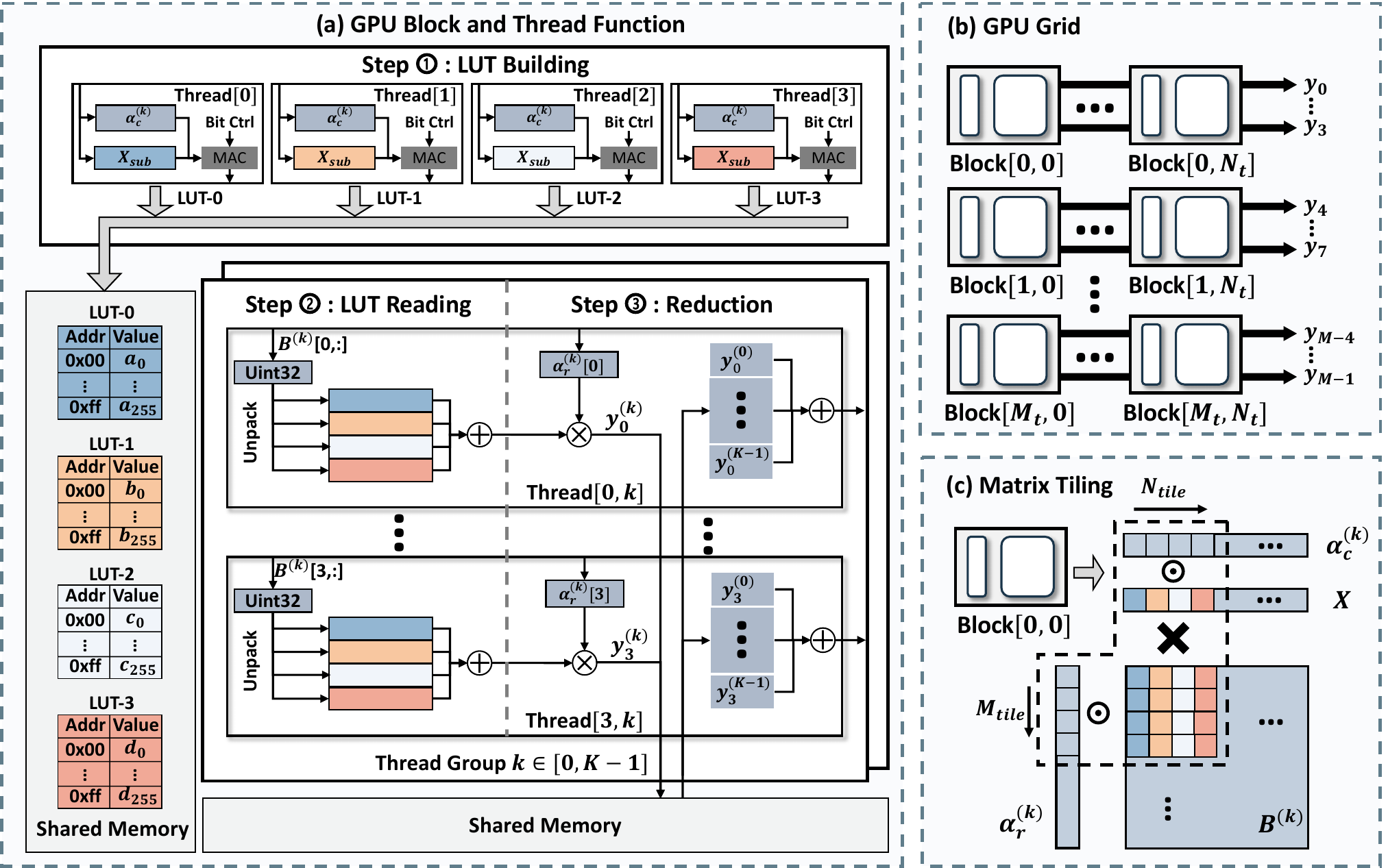}
        \caption{Overview of the FluxBin LUT-based GEMV Kernel.}
        \label{fig:kernel-overview}
    \end{subfigure}
    \hfill
    \begin{subfigure}[t]{0.38\linewidth}
        \centering
        \includegraphics[width=\linewidth]{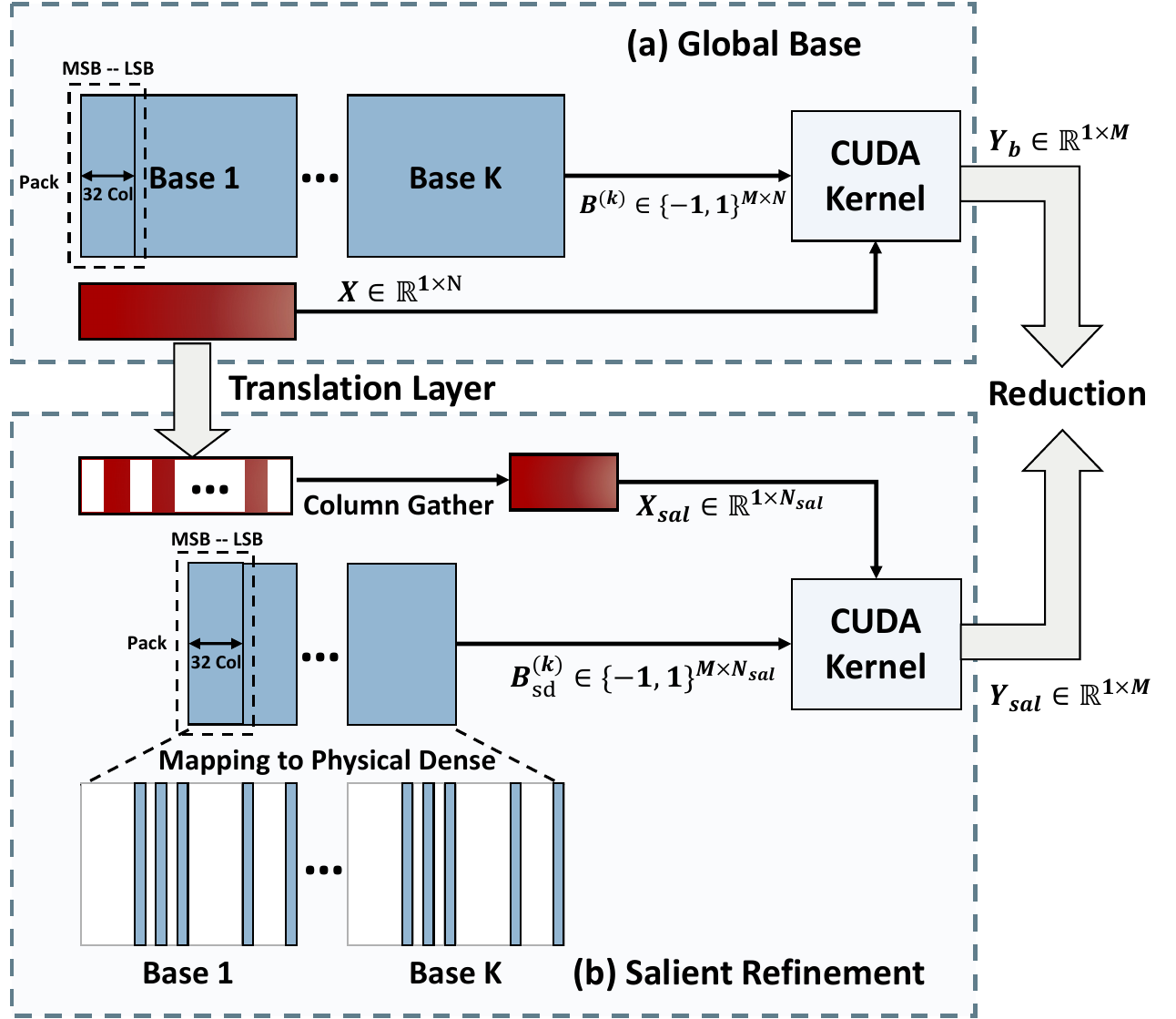}
        \caption{Virtual Columnar Mapping.}
        \label{fig:salient-gather}
    \end{subfigure}
    \vspace{-0.5em}
    \caption{FluxBin kernel architecture and VCM strategy.}
    \label{fig:kernel-vcm}
\end{figure*}

Based on Eqs.~\eqref{equa:rc-decompoes} and \eqref{equa:hybrid-base}, the GEMV is:
\begin{equation}
\small
\begin{aligned}
\mathbf{X} \hat{\mathbf{W}}^T ={}& \underbrace{\mathbf{X} \!\left(\textstyle\sum_{k=1}^{K_b} \mathbf{B}^{(k)} \!\odot\! (\boldsymbol{\alpha}_{r}^{(k)} \boldsymbol{\alpha}_{c}^{(k)\top})\right)^{\!T}}_{\text{Global Base}} \\
&+ \underbrace{\mathbf{X} \!\left(\textstyle\sum_{k=1}^{K_s} (\mathbf{B}_{{s}}^{(k)} \!\odot\! \mathcal{M}) \!\odot\! (\boldsymbol{\alpha}_{sr}^{(k)} \boldsymbol{\alpha}_{sc}^{(k)\top}) \right)^{\!T}}_{\text{Salient Refinement}}
\end{aligned}
\label{equa:gmev-fluxbin}
\end{equation}

Since \(\boldsymbol{\alpha}_c^{(k)}\) and \(\boldsymbol{\alpha}_r^{(k)}\) are diagonal per-column and per-row factors, the Global Base GEMV \(\mathbf{Y}_\text{b}\) factorizes so that the column scale folds into the input and the row scale is applied after the binary matvec:
\begin{equation}
\small
\mathbf{Y}_\text{b} = \sum\limits_{k=1}^{K_b} \left( (\mathbf{X} \odot \boldsymbol{\alpha}_{c}^{(k)\top}) \, \mathbf{B}^{(k)T} \right) \odot \boldsymbol{\alpha}_{r}^{(k)\top}
\label{equa:global-base-GEMV}
\end{equation}

\textbf{Virtual Columnar Mapping (VCM) for Salient Phase.} 
The GEMV computation of salient refinement is restricted to sparse columns defined by \(\mathcal{I}_{\text{sal}}\). Since conventional sparse implementations suffer from irregular memory access, we introduce VCM, inspired by OS memory virtualization. As shown in Figure~\ref{fig:salient-gather}, we map the logical sparse bases to physical dense matrix, and maintain a translation layer (via \(\mathcal{I}_{\text{sal}}\)) to gather activation and scaling factor at runtime.

\begin{definition}[VCM Projection]
\label{def:vcm}
VCM decouples logical sparsity from physical storage. Salient columns are remapped into a contiguous dense matrix \(\mathbf{B}^{(k)}_{sd} \in \{-1, 1\}^{M \times N_{\text{sal}}}\):
\begin{equation}
\small
\mathbf{B}^{(k)}_{sd} = [\mathbf{B}^{(k)}_{s; :, j} \mid j \in \mathcal{I}_{\text{sal}} ]
\label{equa:salient-dense-extract}
\end{equation}
\end{definition}

Prior to GEMV, we gather input activations into \(\mathbf{X}_{\text{sal}}\) and extract scales \(\boldsymbol{\alpha}_{scd}^{(k)}\):
\begin{equation}
\small
\begin{aligned}
\mathbf{X}_{\text{sal}} &= [\mathbf{X}_{:, j} \mid j \in \mathcal{I}_{\text{sal}} ], \\
\boldsymbol{\alpha}_{scd}^{(k)} &= [\boldsymbol{\alpha}_{sc}^{(k)}[j] \mid j \in \mathcal{I}_{\text{sal}} ]
\end{aligned}
\label{equa:salient-column-gather}
\end{equation}
This transforms the logically sparse operation into a physically dense GEMV, enabling kernel reuse for Global Base and Salient Refinement.

\textbf{Stage 1: LUT Building with Scale Fusion (BSF).} 
We partition inputs into sub-vectors \(\mathbf{X}_{\text{sub}}\) of size \(\mu\)~\cite{lut-gemm}. Exploiting the column-wise scaling of \(\boldsymbol{\alpha}_c\) that shared by all rows, we fuse it directly into the LUT, computing entries using \(\mathbf{X} \odot \boldsymbol{\alpha}_c\).

\begin{definition}[LUT-BSF]
For an integer index \(p \in [0, 2^{\mu}-1]\), \(b_n(p)\in \{0,1\}\) denote the $n$-th bit of $p$, the fused LUT entry is:
\begin{equation}
\small
\operatorname{{LUT}_\text{con}}[p] = \sum_{n=1}^{\mu} (\mathbf{X}_{\text{sub}}[n] \cdot \boldsymbol{\alpha}_{c, sub}[n]) \cdot (2 b_n(p) - 1)
\label{equa:LUT-construction}
\end{equation}
\end{definition}
Here, \((2 b_n(p) - 1)\) maps bits to \(\{-1, +1\}\). This strategy decouples \(\boldsymbol{\alpha}_{c}\) from the main GEMV loop, effectively eliminating related floating-point operations.

\begin{figure*}[t]
    \centering
    \includegraphics[width=0.7\linewidth]{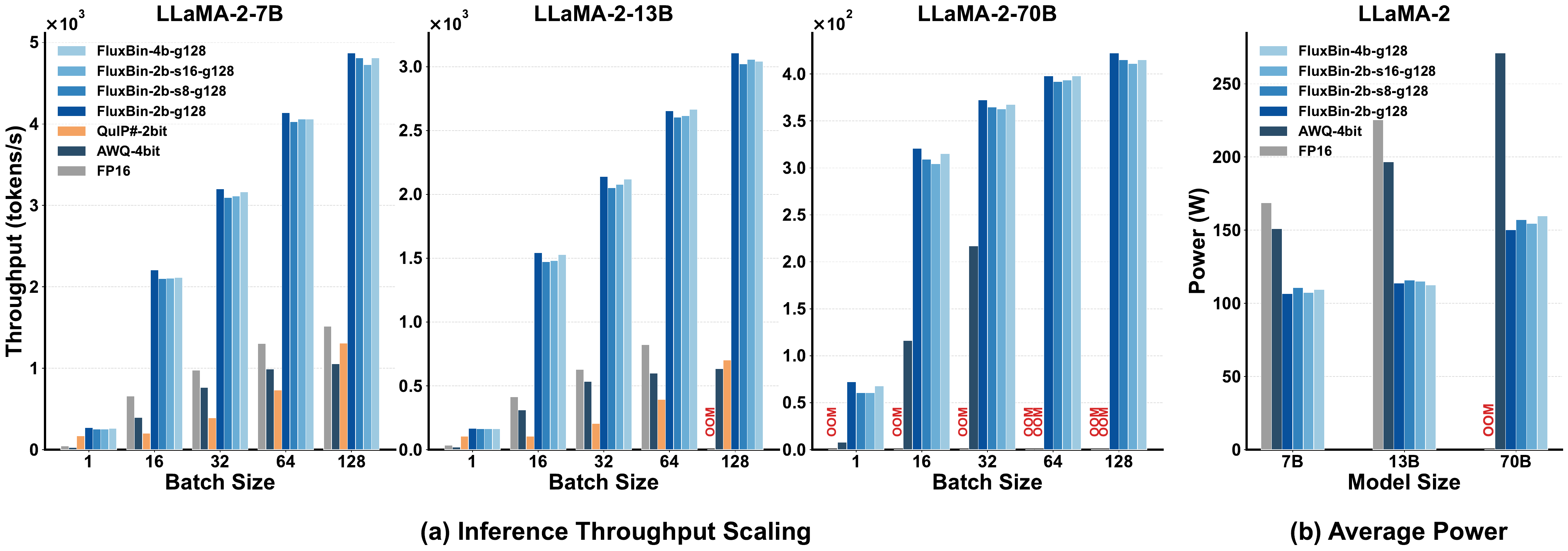}
    \vspace{-0.3em}
    \includegraphics[width=0.7\linewidth]{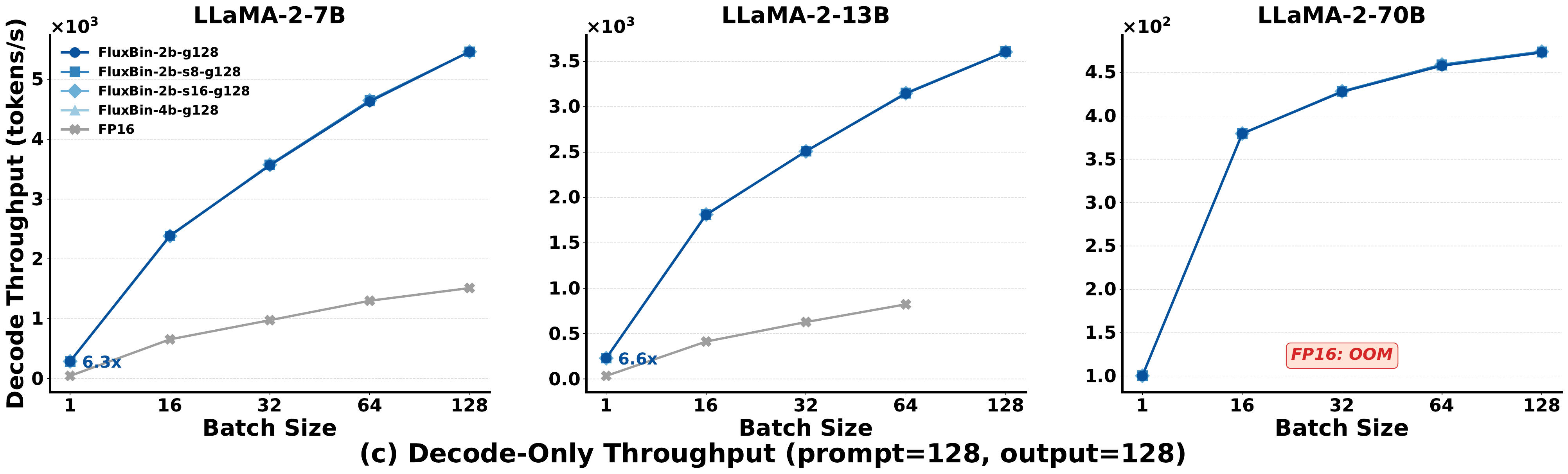}
    \vspace{-0.5em}
    \caption{(a) E2E throughput scaling. (b) Average power. (c) Decode-only throughput.}
    \label{fig:throughput-power-llama2}
\end{figure*}

\textbf{Stage 2\&3: LUT Reading and Reduction.} Instead of unpacking weights to Float16, the kernel directly streams compressed Uint32 weights. For each sub-block of size \(\mu\), we extract pattern indices \(p\) via bitwise ops to retrieve pre-computed partial sums from LUT. The results are
accumulated modulated by the row-wise scaling factor in registers before being written to global memory.
\begin{equation}
\small
\begin{aligned}
\mathbf{Y}_{\text{b}} &= \sum_{k=1}^{K_b} \left( \operatorname{LUT_{\text{read}}} [\mathbf{B}^{(k)}] \odot \boldsymbol{\alpha}_{r}^{(k)\top} \right), \\
\mathbf{Y}_{\text{sal}} &= \sum_{k=1}^{K_s} \left( \operatorname{LUT_{\text{read}}} [\mathbf{B}_{sd}^{(k)}] \odot \boldsymbol{\alpha}_{sr}^{(k)\top} \right)
\end{aligned}
\label{equa:LUT-reading}
\end{equation}
\textbf{Matrix Tiling.}
As shown in Figure~\ref{fig:kernel-overview}~(c), \(\mathbf{B}\) is partitioned into \(M_{\text{tile}} \times N_{\text{tile}}\) tiles mapped to thread blocks. Within each tile, input sub-vectors and \(\boldsymbol{\alpha}_{c, sub}\) are fused to build LUTs in shared memory, queried by all threads in the block.

\section{Experiment}
\label{sec:experiment}

\noindent\textbf{Experimental Setup.} All evaluations are conducted on a single {NVIDIA A100 (80GB) GPU}. For end-to-end throughput, we customize the Hugging Face {generate} API~\cite{wolf2020transformers} with {CUDA graphs} to eliminate Python scheduling and kernel launch overhead: both prefill and decode kernels are captured and replayed from a pre-recorded graph with a static KV cache, ensuring that the measured latency reflects actual kernel execution time rather than framework overhead.

Regarding FluxBin configurations, we adopt the notation {Kb-sS-gG}, where the leading number K denotes the base count (the literal ``b'' stands for ``bases''), shared by the global and salient-refinement branches (we use the same number of bases for both). s denotes salient columns per group, and g is group size for both saliency search and row-wise scale. We primarily evaluate the mixed-precision variants {2b-s8-g128} and {2b-s16-g128}. Additionally, {2b-g128} and {4b-g128} (pure global base) are included to provide a comprehensive analysis.

\definecolor{rank1}{RGB}{210, 210, 210} %
\definecolor{rank2}{RGB}{230, 230, 230} %
\definecolor{rank3}{RGB}{248, 248, 248} %

\newcommand{\best}[1]{\cellcolor{rank1}#1}
\newcommand{\secondbest}[1]{\cellcolor{rank2}#1}
\newcommand{\third}[1]{\cellcolor{rank3}#1}

\begin{table*}[!htbp]
\centering
\setlength{\abovecaptionskip}{2pt}
\setlength{\belowcaptionskip}{0pt}
\caption{\small E2E speed and zero-shot accuracy on LLaMA-2. $^{*}$FT/Dist used. $^{\dagger}$Avg bit incl.\ scales (Sec.~\ref{sec:appendix-storage}). $^{\ddagger}$HS=HellaSwag, WG=Winogrande. \colorbox{rank1}{Best}/\colorbox{rank2}{2nd}/\colorbox{rank3}{3rd}.}
\label{tab:llama2-task}
\setlength{\tabcolsep}{3pt}
\renewcommand{\arraystretch}{0.9}
\tiny
\begin{tabular}{l l c c c c c c c c c c c}
\toprule
\textbf{LLaMA-2} & \textbf{Method} & \textbf{FT/Dist}$^{*}$ & \textbf{W bit}$^{\dagger}$ & \textbf{Speed (tok/s)} $\uparrow$ &
\textbf{Wiki} $\downarrow$ & \textbf{PIQA} $\uparrow$ & \textbf{ARC-e} $\uparrow$ &
\textbf{ARC-c} $\uparrow$ & \textbf{BoolQ} $\uparrow$ &
\textbf{HS}$^{\ddagger}$ $\uparrow$ & \textbf{WG}$^{\ddagger}$ $\uparrow$ & \textbf{Avg.} $\uparrow$ \\
\midrule
\multirow{8}{*}{7B}
& FP16                     &  & 16 & 42.94 & 5.47 & 78.13 & 75.46 & 43.00 & 79.36 & 57.08 & 69.38 & 67.07 \\
& GPTQ	&\XSolidBrush	&3.125	&--	&8.42	&73.50	&63.43	&30.29	&67.00	&\third{48.57}	&\best{64.96}	&57.96 \\
& OmniQuant                & \Checkmark & 3    & 83.40 & \secondbest{6.62} & \third{74.70} & 64.26 & \best{35.92} & 66.29 & \best{51.73} & 63.45 & \secondbest{59.39} \\
& SLiM-LLM              & \Checkmark & 2    &73.70  & 16.38 & 63.43 & 46.42 & 23.63 & 61.71 & 33.22 & 55.64 & 47.34 \\
& AQLM                     & \Checkmark & 2.02 & 98.50 & \best{6.59} & 72.69 & \third{65.11} & \third{32.85} & 69.24 & \secondbest{49.34} & 62.59 & 58.64 \\
& QuIP\#                   & \Checkmark & 2.02 & \third{131.97} & 8.22 & \secondbest{75.10} & 64.60 & \secondbest{34.60} & \best{74.31} & 42.94 & \secondbest{64.90} & \best{59.41} \\
& FluxBin-2b-s8-g128       & \XSolidBrush & 2.63 & \best{254.39} & 8.76 & 72.52 & \best{65.45} & 30.97 & \third{69.88} & 45.64 & 64.09 & 58.09 \\
& FluxBin-2b-s16-g128      & \XSolidBrush & 2.75 & \secondbest{250.87} & \third{8.20} & \best{75.50} & \secondbest{65.40} & 32.42 & \secondbest{70.80} & 45.97 & \third{64.80} & \third{59.15} \\
\midrule
\multirow{8}{*}{13B}
& FP16                     &  & 16 & 33.14 & 4.88 & 79.49 & 78.87 & 47.53 & 82.20 & 60.19 & 72.69 & 70.16 \\
& GPTQ	&\XSolidBrush	&3.125	&--	&6.40	&75.95	&\secondbest{73.23}	&\third{41.89}	&73.64	&54.15	&67.56	&\third{64.40} \\
& OmniQuant                & \Checkmark & 3    & 57.60 & \best{5.58} & \best{77.69} & 69.02 & \secondbest{41.97} & 69.02 & \best{56.79} & 65.09 & 63.26 \\
& SLiM-LLM              & \Checkmark & 2    &  61.20 & 9.41 & 64.96 & 54.67 & 24.82 & 65.22 & 38.43 & 55.48 & 50.60 \\
& AQLM                     & \Checkmark & 2    & 32.90 & \secondbest{5.60} & \third{76.71} & \best{78.79} & 40.36 & 75.81 & \third{54.52} & 64.80 & \secondbest{65.17} \\
& QuIP\#                   & \Checkmark & 2.01 & \third{84.16} & \third{6.06} & \secondbest{77.30} & 69.30 & \best{42.92} & \best{79.24} & \secondbest{55.46} & \third{67.70} & \best{65.32} \\
& FluxBin-2b-s8-g128       & \XSolidBrush & 2.63 & \best{162.94} & 7.43 & 75.46 & 71.80 & 36.52 & \secondbest{79.11} & 49.36 & \secondbest{69.53} & 63.63 \\
& FluxBin-2b-s16-g128      & \XSolidBrush & 2.75 & \secondbest{162.73} & 7.16 & 75.46 & \third{72.81} & 38.05 & \third{76.91} & 50.53 & \best{69.96} & 63.95 \\
\midrule
\multirow{7}{*}{70B}
& FP16                     &  & 16 & OOM   & 3.12 & 81.50 & 82.70 & 54.10 & 85.17 & 65.33 & 80.43 & 74.87 \\
& GPTQ	&\XSolidBrush	&3.125	&--	&4.96	&79.27	&78.41	&48.12	&79.51	&59.10	&74.66	&69.85 \\
& OmniQuant                & \Checkmark & 3    & --    & \best{3.92} & \third{79.70} & 75.58 & 47.52 & 66.48 & \secondbest{61.92} & 73.71 & 67.49 \\
& AQLM                     & \Checkmark & 2.07 & 6.82  & \secondbest{3.94} & \best{80.30} & \third{78.79} & \best{51.19} & \secondbest{82.29} & \best{62.23} & 75.77 & \best{71.76} \\
& QuIP\#                   & \Checkmark & 2.01 & \third{18.64} & \third{4.21} & \best{80.30} & 77.30 & 48.70 & \best{82.91} & \secondbest{61.92} & \third{75.90} & \secondbest{71.17} \\
& FluxBin-2b-s8-g128       & \XSolidBrush & 2.63 & \best{60.67} & 4.90 & 78.64 & \secondbest{80.22} & \secondbest{49.83} & \third{81.04} & 59.16 & \best{76.80} & \third{70.95} \\
& FluxBin-2b-s16-g128      & \XSolidBrush & 2.75 & \secondbest{60.60} & 4.84 & 79.49 & \best{80.26} & \third{49.06} & 79.54 & 59.34 & \secondbest{76.32} & 70.67 \\
\bottomrule
\end{tabular}

\vspace{4pt}
\scriptsize
\raggedright

\end{table*}

\subsection{End-to-end Analysis}

We compare FluxBin against the FP16 baseline and SOTA quantization frameworks, including AutoAWQ (4-bit) and QuIP\# (2-bit).

\textbf{Throughput and Scalability.} As shown in Figure~\ref{fig:throughput-power-llama2}~(a), FluxBin demonstrates superior throughput across all batch sizes, with hybrid configurations (2b-s8/s16) matching the pure 2-bit baseline. Unlike conventional weight-only quantization whose advantage diminishes at large batch, FluxBin retains speedup because its LUT-based computation reduction ($\mu/K_b = 4\times$, Sec.~\ref{sec:accel-analysis}) is batch-size independent, and the batch dimension is tiled per thread block with fixed per-block shared memory cost.
To further isolate the decode phase, Figure~\ref{fig:throughput-power-llama2}~(c) reports decode-only throughput. FluxBin achieves $6.3\times$--$6.6\times$ decode speedup at BS=1, and the gap persists at large batch sizes, demonstrating that the LUT kernel delivers consistent acceleration throughout the autoregressive generation process.

\textbf{Memory Efficiency.} 
FluxBin achieves up to $4\times$ memory reduction as shown in Table~\ref{tab:max-memory-512} (Appendix~\ref{sec:appendix-memory}). This critical compression capability enables the feasible execution of massive models, such as LLaMA-2-70B, on a single A100 GPU, whereas the FP16 baseline fails due to Out-Of-Memory (OOM) errors.

\textbf{Energy Efficiency.} 
As visualized in Figure~\ref{fig:throughput-power-llama2}~(b), FluxBin significantly lowers the average power during inference (e.g., dropping from $\sim$225W to $\sim$115W on 13B). Combined with the speedup, this results in a total energy reduction of $10.19\times$ compared to FP16 (detailed in Appendix~\ref{sec:energy_utilization}). This efficiency stems from minimized data movement overhead and the reduction of floating-point arithmetic. %

\textbf{Architectural Robustness and Scalability.} 
To validate the generality of FluxBin, we extend our evaluation to the Qwen3 family (featuring GQA) and the massive LLaMA-3.1-70B (detailed in Appendix~\ref{sec:throughput_extension}). The results mirror our LLaMA-2 findings, and empirically confirms that FluxBin is architecture-agnostic, delivering consistent acceleration regardless of attention mechanisms or model size.
\begin{figure*}[t]
    \centering
    \includegraphics[width=0.8\linewidth]{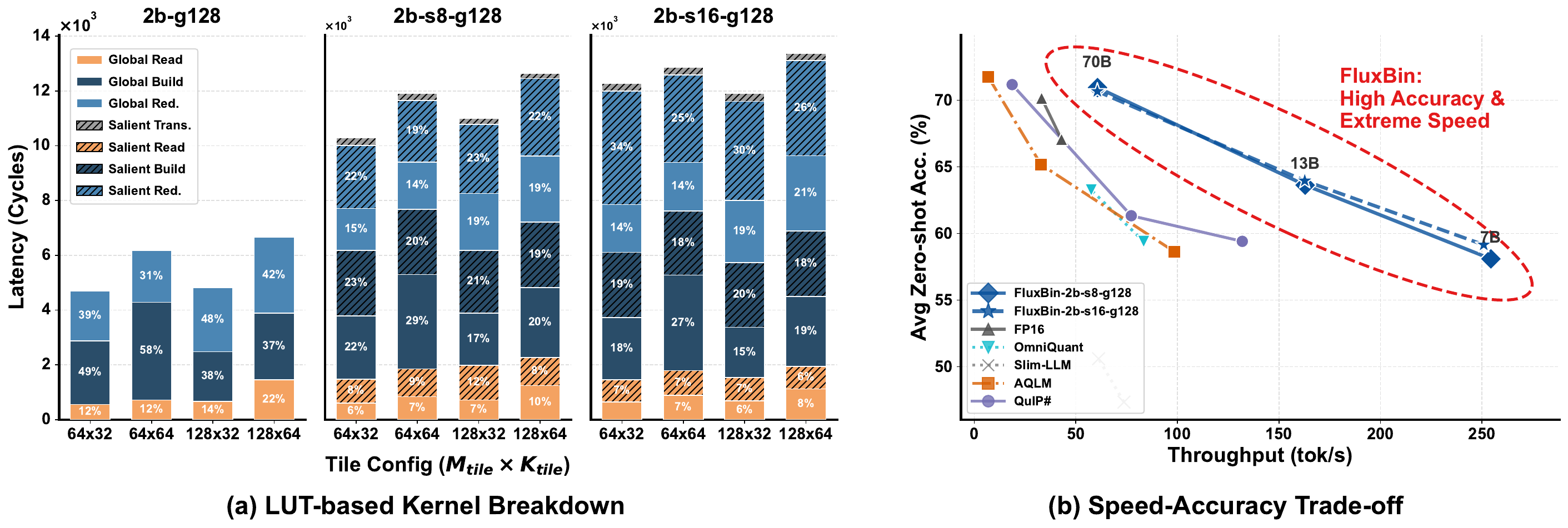}
    \vspace{-0.5em}
    \caption{(a) Latency breakdown of LUT-based kernel. (b) Speed-accuracy trade-off on LLaMA-2.}
    \label{fig:lut-profile-trade-off}
\end{figure*}

\subsection{Speed vs. Accuracy}
\label{sec:speed_acc}

Table~\ref{tab:llama2-task} presents a comparison of FluxBin with SOTA ultra-low-bit quantization methods~\cite{gptq, omni, slim-llm, aqlm, quip} across the LLaMA-2 family~\cite{llama2}. We analyze the results from accuracy retention and inference speed. During the speed evaluation, we set the input and output length to 512.

\textbf{Competitive Accuracy without Training.} Despite being a training-free method, FluxBin achieves accuracy comparable to that of fine-tuning or distillation based approaches. On LLaMA-2-7B, our {2b-s16-g128} configuration attains an average zero-shot accuracy of 59.15\%, which is on par with the heavily fine-tuned QuIP\# (59.41\%) and outperforms AQLM (58.64\%). %

\textbf{Extreme Speed Advantage.} For LLaMA-2-7B, FluxBin reaches {254.39 tokens/s}, representing a {5.92\(\times\) speedup} over FP16 baseline. This speed advantage scales effectively to larger models; on LLaMA-2-70B, FluxBin achieves {3.25\(\times\)} higher throughput compared to QuIP\#. %

\textbf{Optimal Trade-off.}
As visualized in Figure~\ref{fig:lut-profile-trade-off}~(b), existing methods typically compromise significant speed to maintain accuracy (bottom-right) or sacrifice accuracy for compression (top-left). FluxBin occupies the desirable {top-right region} of the scaling curve. This positions FluxBin as the optimal solution that simultaneously delivers high-fidelity generation and ultra-low latency, effectively breaking the conventional constraints of the speed-accuracy trade-off.

\textbf{Generalization Across Different Architectures.} 
To demonstrate FluxBin's broad applicability, we extend our evaluation to the latest LLaMA-3.1 and Qwen3 model families (detailed in Appendix~\ref{sec:extension_eval}).
In sensitive LLaMA-3.1 models, FluxBin exhibits remarkable robustness, achieving accuracy comparable to 3-bit baselines (e.g., GPTQ), outperforming AQLM (+17.5\% accuracy) while maintaining leadership in inference throughput. This suggests that as models become more complex and sensitive, the structural robustness of our Hybrid Base strategy becomes increasingly critical.

\subsection{Ablation Study}
\label{sec:ablation}

\textbf{Impact of Quantization Components.} The upper section of Table~\ref{tab:ablation} validates each algorithmic component. Removing {column scaling} significantly degrades PPL, while replacing {Saliency Search} with random selection causes a catastrophic failure (PPL 26.02), confirming the necessity of our Hessian-guided strategy. Relying solely on the Global Base increases PPL to 13.71, indicating that salient refinement is crucial for error compensation. %

\textbf{Impact of Kernel Optimization.}
The lower section of Table~\ref{tab:ablation} evaluates the benefits of the VCM strategy. Disabling the {VCM} strategy (treating refinement as sparse operations) results in an 18\% speed drop and a 33\% memory surge, validating VCM's ability to ensure coalesced memory access.

\begin{table}[t]
\centering
\scriptsize
\caption{Ablation study on LLaMA-2-7B (baseline: FluxBin-2b-s16-g128).
Removing a component is denoted by \XSolidBrush.}
\label{tab:ablation}
\setlength{\tabcolsep}{3pt}
\renewcommand{\arraystretch}{0.95}
\begin{tabular}{l c cc}
\toprule
\textbf{Config.} & \textbf{w/o} & \textbf{Wiki PPL} $\downarrow$ & \textbf{PIQA} $\uparrow$ \\
\midrule
Baseline &  & \textbf{8.20}  & \textbf{75.50} \\
Column Scaling   & \XSolidBrush & 8.96  & 71.65 \\
Saliency Search  & \XSolidBrush & 26.02 & 66.92 \\
Hybrid Base      & \XSolidBrush & 13.71 & 69.79 \\
\addlinespace[0.6ex]
\cmidrule(lr){1-4}
\addlinespace[0.6ex]
\textbf{Config.} & \textbf{w/o} & \textbf{Speed (tok/s)} $\uparrow$ & \textbf{Max Mem. (MB)} $\downarrow$ \\
\midrule
Baseline &  & \textbf{250.87} & \textbf{3475.21} \\
VCM      & \XSolidBrush & 205.68 & 4640.37 \\
\bottomrule
\end{tabular}
\end{table}

\textbf{Hyperparameter Sensitivity.} 

\textbf{(i)} {Group Size (g):} As shown in Sec.~\ref{sec:ablation_groupsize} Table~\ref{tab:ablation-groupsize}, reducing the group size can improve model performance but result in higher storage for the scaling factor. Empirically, g=128 yields the optimal trade-off, which balances the competitive accuracy and average bit. %

\textbf{(ii)} {Calibration Samples ($N_{cal}$):} Following the GPTQ framework, we estimate the Hessian using $N_{cal}$ random samples from the C4 dataset. As shown in Sec.~\ref {sec:ablation_nsample} Table~\ref{tab:nsample-ablation}, accuracy follows a rise-then-fall trend: $N_{cal}=256$ captures sufficient curvature information, whereas larger sizes ($>512$) lead to overfitting and performance drops.

\textbf{(iii)} {Sub-vector Size ($\mu$):} We demonstrate the impact of $\mu$ in Sec.~\ref{sec:shared_mem_analysis} and Table~\ref{tab:mu-profile}. The smaller $\mu$ minimizes the shared memory footprint, but increases the LUT reading operation, which is the main source of complexity. Despite larger LUTs, a larger $\mu$ without violating shared memory constraints achieves superior energy efficiency, and maintains almost constant inference latency with small $\mu$.   %

\subsection{Discussion: Sources of Acceleration}
\label{sec:accel-analysis}

We identify three sources behind FluxBin's speedup that collectively explain why the framework retains advantages even at large batch sizes.

\textbf{Reduced Memory Traffic.}
A 2-base binary configuration stores weights at $\sim$2 bits ($8\times$ compression vs.\ FP16), directly relieving the HBM bottleneck that dominates autoregressive decoding.

\textbf{Reduced and Simplified Operations.}
The LUT kernel replaces $\mu$ FMA operations per sub-vector with a single shared-memory lookup, reducing complexity by $\mu / K_b$ ($4\times$ at default $\mu{=}8$, $K_b{=}2$; Appendix~\ref{sec:appendix-complexity}). The remaining work is dominated by additions rather than FMAs, which enjoy higher pipeline throughput. Both reductions are batch-size independent.

\textbf{Trading ALU Computation for SRAM Access.}
Dequantization-based kernels spend ALU slots on bit-shift, mask, and type-convert instructions before every GEMV; FluxBin replaces this ALU-heavy path with shared-memory (SRAM) lookups. As SRAM bandwidth is independent of HBM, this offloads work from a contended resource (the ALU) without worsening the memory-bound bottleneck.

\textbf{Empirical Validation.}
Runtime profiling (Table~\ref{tab:GPU-profile}, Appendix~\ref{sec:energy_utilization}) corroborates the analysis above. FluxBin consistently reports $>$99.8\% GPU busy time versus 67--87\% for FP16. Its memory-bandwidth utilization is correspondingly low, reflecting the small weight traffic of the 2-bit format; together these indicate the LUT kernel is limited by on-chip lookup/compute throughput rather than stalled on HBM transfers (GPU busy time reflects occupancy, not arithmetic-unit saturation). The average power drop from 225W to $\sim$115W (13B) directly follows from replacing power-hungry FP16 FMAs with lightweight additions and SRAM lookups. Per-operator kernel latencies against QuIP\# and AQLM, together with a tiling-parameter study, are reported in Appendix~\ref{sec:kernel-latency} and Appendix~\ref{sec:kernel-tile}.
Figure~\ref{fig:lut-profile-trade-off}~(a) further breaks down per-block execution cycles: increasing $M_{\text{tile}}$ from 64 to 128 amortizes LUT Building cost (49\%$\to$38\%), and the VCM translation overhead for hybrid configurations (2b-s8/s16) remains negligible.

\section{Conclusion}

In this work, we propose {FluxBin} to bridge the gap between theoretical compression and realized acceleration. By synergizing {Row-Column Binary Decomposition} and {Hessian-guided Hybrid Bases} with a hardware-optimized {LUT-based Kernel} (LUT-BSF and VCM), we jointly enhance representational capacity and reduce runtime overhead. Across diverse architectures, FluxBin achieves up to \textbf{5.92$\times$ speedup} and \textbf{10.19$\times$ energy savings} while maintaining competitive accuracy, offering a training-free solution for efficient large-scale LLM deployment.

\section*{Limitations}
While FluxBin demonstrates significant advancements in ultra-low-bit LLM inference, we identify two primary limitations.

\textbf{Shared Memory Constraints on Scalability.}
The efficiency of our LUT-based kernel relies heavily on caching pre-computed results in on-chip Shared Memory (SRAM). However, the LUT storage requirement scales exponentially with the sub-vector size ($2^\mu$) and linearly with the batch size per block. While high-end GPUs like the A100 provide ample SRAM (164 KB/SM) to support our optimal configuration ($\mu=8$), deploying FluxBin on hardware with limited Shared Memory may necessitate reducing $\mu$ or the batch size.

\textbf{Optimization Bounds of Pure PTQ.}
FluxBin operates as a strict training-free framework to ensure rapid deployment. While it outperforms existing PTQ methods and rivals fine-tuned baselines, there remains a marginal accuracy gap compared to approaches that use extensive, computationally expensive iterative fine-tuning on specific tasks. Future work could explore integrating lightweight parameter-efficient fine-tuning on top of the binary bases to further bridge this gap without compromising the LUT-based inference speed.

\bibliography{references}

@misc{gpt3,
      title={Language Models are Few-Shot Learners}, 
      author={Tom B. Brown and Benjamin Mann and Nick Ryder and Melanie Subbiah and Jared Kaplan and Prafulla Dhariwal and Arvind Neelakantan and Pranav Shyam and Girish Sastry and Amanda Askell and Sandhini Agarwal and Ariel Herbert-Voss and Gretchen Krueger and Tom Henighan and Rewon Child and Aditya Ramesh and Daniel M. Ziegler and Jeffrey Wu and Clemens Winter and Christopher Hesse and Mark Chen and Eric Sigler and Mateusz Litwin and Scott Gray and Benjamin Chess and Jack Clark and Christopher Berner and Sam McCandlish and Alec Radford and Ilya Sutskever and Dario Amodei},
      year={2020},
      eprint={2005.14165},
      archivePrefix={arXiv},
      primaryClass={cs.CL},
      url={https://arxiv.org/abs/2005.14165}, 
}

@misc{llama2,
      title={Llama 2: Open Foundation and Fine-Tuned Chat Models}, 
      author={Hugo Touvron and Louis Martin and Kevin Stone and Peter Albert and Amjad Almahairi and Yasmine Babaei and Nikolay Bashlykov and Soumya Batra and Prajjwal Bhargava and Shruti Bhosale and Dan Bikel and Lukas Blecher and Cristian Canton Ferrer and Moya Chen and Guillem Cucurull and David Esiobu and Jude Fernandes and Jeremy Fu and Wenyin Fu and Brian Fuller and Cynthia Gao and Vedanuj Goswami and Naman Goyal and Anthony Hartshorn and Saghar Hosseini and Rui Hou and Hakan Inan and Marcin Kardas and Viktor Kerkez and Madian Khabsa and Isabel Kloumann and Artem Korenev and Punit Singh Koura and Marie-Anne Lachaux and Thibaut Lavril and Jenya Lee and Diana Liskovich and Yinghai Lu and Yuning Mao and Xavier Martinet and Todor Mihaylov and Pushkar Mishra and Igor Molybog and Yixin Nie and Andrew Poulton and Jeremy Reizenstein and Rashi Rungta and Kalyan Saladi and Alan Schelten and Ruan Silva and Eric Michael Smith and Ranjan Subramanian and Xiaoqing Ellen Tan and Binh Tang and Ross Taylor and Adina Williams and Jian Xiang Kuan and Puxin Xu and Zheng Yan and Iliyan Zarov and Yuchen Zhang and Angela Fan and Melanie Kambadur and Sharan Narang and Aurelien Rodriguez and Robert Stojnic and Sergey Edunov and Thomas Scialom},
      year={2023},
      eprint={2307.09288},
      archivePrefix={arXiv},
      primaryClass={cs.CL},
      url={https://arxiv.org/abs/2307.09288}, 
}

@INPROCEEDINGS{damf,
  author={Yang, Qingyao and Wang, Xiaoqin and Zhou, Yumei and Li, Qiang and Qiao, Shushan},
  booktitle={2025 IEEE International Symposium on Circuits and Systems (ISCAS)}, 
  title={Hardware Friendly Transformer Optimization with Dynamic Attention Matrix Fusion}, 
  year={2025},
  volume={},
  number={},
  pages={1-5},
  doi={10.1109/ISCAS56072.2025.11044099}}

@article{gptq,
  title={Gptq: Accurate post-training quantization for generative pre-trained transformers},
  author={Frantar, Elias and Ashkboos, Saleh and Hoefler, Torsten and Alistarh, Dan},
  journal={arXiv preprint arXiv:2210.17323},
  year={2022}
}

@article{awq,
author = {Lin, Ji and Tang, Jiaming and Tang, Haotian and Yang, Shang and Xiao, Guangxuan and Han, Song},
title = {AWQ: Activation-aware Weight Quantization for On-Device LLM Compression and Acceleration},
year = {2025},
issue_date = {December 2024},
publisher = {Association for Computing Machinery},
address = {New York, NY, USA},
volume = {28},
number = {4},
issn = {2375-0529},
month = jan,
pages = {12–17},
numpages = {6}
}

@misc{leiq,
      title={Exploring Layer-wise Information Effectiveness for Post-Training Quantization in Small Language Models}, 
      author={He Xiao and Qingyao Yang and Dirui Xie and Wendong Xu and Zunhai Su and Runming yang and Wenyong Zhou and Haobo Liu and Zhengwu Liu and Ngai Wong},
      year={2025},
      eprint={2508.03332},
      archivePrefix={arXiv},
      primaryClass={cs.LG},
      url={https://arxiv.org/abs/2508.03332}, 
}

@inproceedings{xnor-net_2016,
	location = {Cham},
	title = {{XNOR}-Net: {ImageNet} Classification Using Binary Convolutional Neural Networks},
	isbn = {978-3-319-46493-0},
	pages = {525--542},
	booktitle = {Computer Vision – {ECCV} 2016},
	publisher = {Springer International Publishing},
	author = {Rastegari, Mohammad and Ordonez, Vicente and Redmon, Joseph and Farhadi, Ali},
	editor = {Leibe, Bastian and Matas, Jiri and Sebe, Nicu and Welling, Max},
	date = {2016},
}

@inproceedings{QBB,
author = {Bulat, Adrian and Ouali, Yassine and Tzimiropoulos, Georgios},
title = {QBB: quantization with binary bases for LLMs},
year = {2024},
isbn = {9798331314385},
publisher = {Curran Associates Inc.},
address = {Red Hook, NY, USA},
booktitle = {Proceedings of the 38th International Conference on Neural Information Processing Systems},
articleno = {105},
numpages = {20},
location = {Vancouver, BC, Canada},
series = {NIPS '24}
}

@misc{ptqtp,
      title={PTQTP: Post-Training Quantization to Trit-Planes for Large Language Models}, 
      author={He Xiao and Runming Yang and Qingyao Yang and Wendong Xu and Zhen Li and Yupeng Su and Zhengwu Liu and Hongxia Yang and Ngai Wong},
      year={2026},
      eprint={2509.16989},
      archivePrefix={arXiv},
      primaryClass={cs.LG},
      url={https://arxiv.org/abs/2509.16989}, 
}

@inproceedings{DB-LLM,
    title = "{DB}-{LLM}: Accurate Dual-Binarization for Efficient {LLM}s",
    author = "Chen, Hong  and
      Lv, Chengtao  and
      Ding, Liang  and
      Qin, Haotong  and
      Zhou, Xiabin  and
      Ding, Yifu  and
      Liu, Xuebo  and
      Zhang, Min  and
      Guo, Jinyang  and
      Liu, Xianglong  and
      Tao, Dacheng",
    booktitle = "Findings of the Association for Computational Linguistics: ACL 2024",
    month = aug,
    year = "2024",
    address = "Bangkok, Thailand",
    publisher = "Association for Computational Linguistics",
    url = "https://aclanthology.org/2024.findings-acl.516/",
    doi = "10.18653/v1/2024.findings-acl.516",
    pages = "8719--8730",
}

@inproceedings{bi,
author = {Huang, Wei and Liu, Yangdong and Qin, Haotong and Li, Ying and Zhang, Shiming and Liu, Xianglong and Magno, Michele and Qi, Xiaojuan},
title = {BiLLM: pushing the limit of post-training quantization for LLMs},
year = {2024},
booktitle = {Proceedings of the 41st International Conference on Machine Learning},
location = {Vienna, Austria},
series = {ICML'24}
}

@article{ARB-LLM,
  title={ARB-LLM: Alternating Refined Binarizations for Large Language Models},
  author={Zhiteng Li and Xianglong Yan and Tianao Zhang and Haotong Qin and Dong Xie and Jiang Tian and Zhongchao Shi and Linghe Kong and Yulun Zhang and Xiaokang Yang},
  journal={ArXiv},
  year={2024},
  volume={abs/2410.03129},
  url={https://api.semanticscholar.org/CorpusID:273163233}
}

@article{HBLLM,
  title={HBLLM: Wavelet-Enhanced High-Fidelity 1-Bit Quantization for LLMs},
  author={Ningning Chen, Weicai Ye, Ying Jiang},
  journal={arXiv preprint arXiv:2512.00862},
  year={2025}
}

@article{pb-llm,
  title={PB-LLM: Partially Binarized Large Language Models},
  author={Yuzhang Shang and Zhihang Yuan and Qiang Wu and Zhen Dong},
  journal={ArXiv},
  year={2023},
  volume={abs/2310.00034},
  url={https://api.semanticscholar.org/CorpusID:263333921}
}

@inproceedings{SpQR,
  author       = {Tim Dettmers and
                  Ruslan Svirschevski and
                  Vage Egiazarian and
                  Denis Kuznedelev and
                  Elias Frantar and
                  Saleh Ashkboos and
                  Alexander Borzunov and
                  Torsten Hoefler and
                  Dan Alistarh},
  title        = {SpQR: {A} Sparse-Quantized Representation for Near-Lossless {LLM}
                  Weight Compression},
  booktitle    = {The Twelfth International Conference on Learning Representations,
                  {ICLR} 2024, Vienna, Austria, May 7-11, 2024},
  publisher    = {OpenReview.net},
  year         = {2024},
}

@inproceedings{Llm-mq,
  title={Llm-mq: Mixed-precision quantization for efficient llm deployment},
  author={Li, Shiyao and Ning, Xuefei and Hong, Ke and Liu, Tengxuan and Wang, Luning and Li, Xiuhong and Zhong, Kai and Dai, Guohao and Yang, Huazhong and Wang, Yu},
  booktitle={The Efficient Natural Language and Speech Processing Workshop with NeurIPS},
  volume={9},
  pages={3},
  year={2023}
}

@misc{slim-llm,
      title={SliM-LLM: Salience-Driven Mixed-Precision Quantization for Large Language Models}, 
      author={Wei Huang and Haotong Qin and Yangdong Liu and Yawei Li and Qinshuo Liu and Xianglong Liu and Luca Benini and Michele Magno and Shiming Zhang and Xiaojuan Qi},
      year={2025},
      eprint={2405.14917},
      archivePrefix={arXiv},
      url={https://arxiv.org/abs/2405.14917}, 
}

@inproceedings{LQ-Nets,
author = {Zhang, Dongqing and Yang, Jiaolong and Ye, Dongqiangzi and Hua, Gang},
title = {LQ-Nets: Learned Quantization for Highly Accurate and Compact Deep Neural Networks},
year = {2018},
isbn = {978-3-030-01236-6},
publisher = {Springer-Verlag},
address = {Berlin, Heidelberg},
url = {https://doi.org/10.1007/978-3-030-01237-3_23},
doi = {10.1007/978-3-030-01237-3_23},
booktitle = {Computer Vision – ECCV 2018: 15th European Conference, Munich, Germany, September 8-14, 2018, Proceedings, Part VIII},
pages = {373–390},
numpages = {18},
location = {Munich, Germany}
}

@INPROCEEDINGS{int8NVIDIA,
  author={Markidis, Stefano and Chien, Steven Wei Der and Laure, Erwin and Peng, Ivy Bo and Vetter, Jeffrey S.},
  booktitle={2018 IEEE International Parallel and Distributed Processing Symposium Workshops (IPDPSW)}, 
  title={NVIDIA Tensor Core Programmability, Performance \& Precision}, 
  year={2018},
  volume={},
  number={},
  pages={522-531},
  doi={10.1109/IPDPSW.2018.00091}}

@misc{zeng2023glm130,
      title={GLM-130B: An Open Bilingual Pre-trained Model}, 
      author={Aohan Zeng and Xiao Liu and Zhengxiao Du and Zihan Wang and Hanyu Lai and Ming Ding and Zhuoyi Yang and Yifan Xu and Wendi Zheng and Xiao Xia and Weng Lam Tam and Zixuan Ma and Yufei Xue and Jidong Zhai and Wenguang Chen and Peng Zhang and Yuxiao Dong and Jie Tang},
      year={2023},
      eprint={2210.02414},
      archivePrefix={arXiv},
      primaryClass={cs.CL},
      url={https://arxiv.org/abs/2210.02414}, 
}

@misc{lut-gemm,
      title={LUT-GEMM: Quantized Matrix Multiplication based on LUTs for Efficient Inference in Large-Scale Generative Language Models}, 
      author={Gunho Park and Baeseong Park and Minsub Kim and Sungjae Lee and Jeonghoon Kim and Beomseok Kwon and Se Jung Kwon and Byeongwook Kim and Youngjoo Lee and Dongsoo Lee},
      year={2024},
      eprint={2206.09557},
      archivePrefix={arXiv},
      primaryClass={cs.DC},
      url={https://arxiv.org/abs/2206.09557}, 
}

@inproceedings{lut-nn,
author = {Tang, Xiaohu and Wang, Yang and Cao, Ting and Zhang, Li Lyna and Chen, Qi and Cai, Deng and Liu, Yunxin and Yang, Mao},
title = {LUT-NN: Empower Efficient Neural Network Inference with Centroid Learning and Table Lookup},
year = {2023},
isbn = {9781450399906},
publisher = {Association for Computing Machinery},
address = {New York, NY, USA},
booktitle = {Proceedings of the 29th Annual International Conference on Mobile Computing and Networking},
articleno = {70},
numpages = {15},
location = {Madrid, Spain},
series = {ACM MobiCom '23}
}

@ARTICLE{UNPU,
  author={Lee, Jinmook and Kim, Changhyeon and Kang, Sanghoon and Shin, Dongjoo and Kim, Sangyeob and Yoo, Hoi-Jun},
  journal={IEEE Journal of Solid-State Circuits}, 
  title={UNPU: An Energy-Efficient Deep Neural Network Accelerator With Fully Variable Weight Bit Precision}, 
  year={2019},
  volume={54},
  number={1},
  pages={173-185},
  doi={10.1109/JSSC.2018.2865489}}

@misc{maleki2023lookupmaigemmincreasing,
      title={Look-Up mAI GeMM: Increasing AI GeMMs Performance by Nearly 2.5x via msGeMM}, 
      author={Saeed Maleki},
      year={2023},
      eprint={2310.06178},
      archivePrefix={arXiv},
      primaryClass={cs.PF},
      url={https://arxiv.org/abs/2310.06178}, 
}

@ARTICLE{9478922,
  author={Xu, Shiyu and Wang, Qi and Wang, Xingbo and Wang, Shihang and Ye, Terry Tao},
  journal={IEEE Transactions on Computer-Aided Design of Integrated Circuits and Systems}, 
  title={Multiplication Through a Single Look-Up-Table (LUT) in CNN Inference Computation}, 
  year={2022},
  volume={41},
  number={6},
  pages={1916-1928},
  doi={10.1109/TCAD.2021.3095825}}

@inproceedings{BiQGEMM,
author = {Jeon, Yongkweon and Park, Baeseong and Kwon, Se Jung and Kim, Byeongwook and Yun, Jeongin and Lee, Dongsoo},
title = {BiQGEMM: matrix multiplication with lookup table for binary-coding-based quantized DNNs},
year = {2020},
isbn = {9781728199986},
publisher = {IEEE Press},
booktitle = {Proceedings of the International Conference for High Performance Computing, Networking, Storage and Analysis},
articleno = {95},
numpages = {16},
location = {Atlanta, Georgia},
series = {SC '20}
}

@inproceedings{ft,
  title={QLoRA: Efficient Finetuning of Quantized LLMs},
  author={Dettmers, Tim and Pagnoni, Artidoro and Holtzman, Ari and Zettlemoyer, Luke},
  booktitle={Advances in Neural Information Processing Systems},
  year={2023}
}

@article{polino2018model,
  title={Model compression via distillation and quantization},
  author={Polino, Antonio and Pascanu, Razvan and Alistarh, Dan},
  journal={arXiv preprint arXiv:1802.05668},
  year={2018}
}

@INPROCEEDINGS{8578384,
  author={Jacob, Benoit and Kligys, Skirmantas and Chen, Bo and Zhu, Menglong and Tang, Matthew and Howard, Andrew and Adam, Hartwig and Kalenichenko, Dmitry},
  booktitle={2018 IEEE/CVF Conference on Computer Vision and Pattern Recognition}, 
  title={Quantization and Training of Neural Networks for Efficient Integer-Arithmetic-Only Inference}, 
  year={2018},
  volume={},
  number={},
  pages={2704-2713},
  doi={10.1109/CVPR.2018.00286}}

@inproceedings{LUT-tensor-core,
author = {Mo, Zhiwen and Wang, Lei and Wei, Jianyu and Zeng, Zhichen and Cao, Shijie and Ma, Lingxiao and Jing, Naifeng and Cao, Ting and Xue, Jilong and Yang, Fan and Yang, Mao},
title = {LUT Tensor Core: A Software-Hardware Co-Design for LUT-Based Low-Bit LLM Inference},
year = {2025},
isbn = {9798400712616},
publisher = {Association for Computing Machinery},
address = {New York, NY, USA},
url = {https://doi.org/10.1145/3695053.3731057},
doi = {10.1145/3695053.3731057},
booktitle = {Proceedings of the 52nd Annual International Symposium on Computer Architecture},
pages = {514–528},
numpages = {15},
location = {
},
series = {ISCA '25}
}

@inproceedings{BiBench,
author = {Qin, Haotong and Zhang, Mingyuan and Ding, Yifu and Li, Aoyu and Cai, Zhongang and Liu, Ziwei and Yu, Fisher and Liu, Xianglong},
title = {BiBench: benchmarking and analyzing network binarization},
year = {2023},
publisher = {JMLR.org},
booktitle = {Proceedings of the 40th International Conference on Machine Learning},
articleno = {1177},
numpages = {38},
location = {Honolulu, Hawaii, USA},
series = {ICML'23}
}

@inproceedings{OBQ,
author = {Frantar, Elias and Singh, Sidak Pal and Alistarh, Dan},
title = {Optimal brain compression: a framework for accurate post-training quantization and pruning},
year = {2022},
isbn = {9781713871088},
publisher = {Curran Associates Inc.},
address = {Red Hook, NY, USA},
booktitle = {Proceedings of the 36th International Conference on Neural Information Processing Systems},
articleno = {323},
numpages = {14},
location = {New Orleans, LA, USA},
series = {NIPS '22}
}

@INPROCEEDINGS{6710599,
  author={Krishnamoorthy, Aravindh and Menon, Deepak},
  booktitle={2013 Signal Processing: Algorithms, Architectures, Arrangements, and Applications (SPA)}, 
  title={Matrix inversion using Cholesky decomposition}, 
  year={2013},
  volume={},
  number={},
  pages={70-72},
  doi={}}

@inproceedings{SparseGPT,
author = {Frantar, Elias and Alistarh, Dan},
title = {SparseGPT: massive language models can be accurately pruned in one-shot},
year = {2023},
publisher = {JMLR.org},
booktitle = {Proceedings of the 40th International Conference on Machine Learning},
articleno = {414},
numpages = {15},
location = {Honolulu, Hawaii, USA},
series = {ICML'23}
}

@inproceedings{wolf2020transformers,
  title={Transformers: State-of-the-art natural language processing},
  author={Wolf, Thomas and Debut, Lysandre and Sanh, Victor and Chaumond, Julien and Delangue, Clement and Moi, Anthony and Cistac, Pierric and Rault, Tim and Louf, Remi and Funtowicz, Morgan and others},
  booktitle={Proceedings of the 2020 conference on empirical methods in natural language processing: system demonstrations},
  pages={38--45},
  year={2020}
}

@inproceedings{omni,
 author = {Shao, Wenqi and Chen, Mengzhao and Zhang, Zhaoyang and Xu, Peng and Zhao, Lirui and Li, Zhiqian and Zhang, Kaipeng and Peng, Gao and Qiao, Yu and Luo, Ping},
 booktitle = {International Conference on Representation Learning},
 editor = {B. Kim and Y. Yue and S. Chaudhuri and K. Fragkiadaki and M. Khan and Y. Sun},
 pages = {45472--45496},
 title = {OmniQuant: Omnidirectionally Calibrated Quantization for Large Language Models},
 volume = {2024},
 year = {2024}
}

@inproceedings{aqlm,
author = {Egiazarian, Vage and Panferov, Andrei and Kuznedelev, Denis and Frantar, Elias and Babenko, Artem and Alistarh, Dan},
title = {Extreme compression of large language models via additive quantization},
year = {2024},
publisher = {JMLR.org},
booktitle = {Proceedings of the 41st International Conference on Machine Learning},
articleno = {488},
numpages = {20},
location = {Vienna, Austria},
series = {ICML'24}
}

@inproceedings{quip,
author = {Tseng, Albert and Chee, Jerry and Sun, Qingyao and Kuleshov, Volodymyr and De Sa, Christopher},
title = {QuIP\#: even better LLM quantization with hadamard incoherence and lattice codebooks},
year = {2024},
publisher = {JMLR.org},
booktitle = {Proceedings of the 41st International Conference on Machine Learning},
articleno = {1987},
numpages = {27},
location = {Vienna, Austria},
series = {ICML'24}
}

@misc{FlexRound,
  title={FlexRound: Learnable Rounding based on Element-wise Division for Post-Training Quantization}, 
  author={Jung Hyun Lee and Jeonghoon Kim and Se Jung Kwon and Dongsoo Lee},
  year={2023},
  eprint={2306.00317},
  archivePrefix={arXiv},
  primaryClass={cs.LG},
  url={https://arxiv.org/abs/2306.00317}, 
}

\appendix

\section{Derivation of Parameter Optimization}
\label{sec:derivation}

In this section, we provide the rigorous mathematical proofs for the optimal update rules presented in the Methodology section.

\subsection{Derivation of Optimal Scaling Factors}

Recall the objective function for a single decomposition order \(k\):
\begin{equation}
\mathcal{L} = \left\|\mathbf{W} - \left(\boldsymbol{\alpha}_r \boldsymbol{\alpha}_c^\top\right) \odot \mathbf{B}\right\|_F^2
\label{eq:appendix_loss}
\end{equation}

\begin{proof}[\textbf{Proof of Proposition~3.2.}]
We aim to derive the closed-form solution for the row scaling vector \(\boldsymbol{\alpha}_r\) while keeping \(\mathbf{B}\) and \(\boldsymbol{\alpha}_c\) fixed.
Since the Frobenius norm is the sum of squared element-wise errors, Eq.~\eqref{eq:appendix_loss} can be expanded as:
\begin{equation}
\mathcal{L} = \sum_{i=1}^{M} \sum_{j=1}^{N} \left( \mathbf{W}_{i j} - \boldsymbol{\alpha}_{r, i} \cdot \boldsymbol{\alpha}_{c, j} \cdot \mathbf{B}_{i j} \right)^2
\end{equation}
The optimization problem for \(\boldsymbol{\alpha}_r\) decouples into \(M\) independent scalar least-squares problems. For the \(i\)-th row scalar \(\boldsymbol{\alpha}_{r, i}\), the partial derivative is:
\begin{equation}
\begin{aligned}
\frac{\partial \mathcal{L}}{\partial \boldsymbol{\alpha}_{r, i}} &= \frac{\partial}{\partial \boldsymbol{\alpha}_{r, i}} \sum_{j=1}^{N} \left( \mathbf{W}_{i j} - \boldsymbol{\alpha}_{r, i} (\boldsymbol{\alpha}_{c, j} \mathbf{B}_{i j}) \right)^2 \\
&= -2 \sum_{j=1}^{N} \left( \mathbf{W}_{i j} - \boldsymbol{\alpha}_{r, i} (\boldsymbol{\alpha}_{c, j} \mathbf{B}_{i j}) \right) \cdot (\boldsymbol{\alpha}_{c, j} \mathbf{B}_{i j})
\end{aligned}
\end{equation}
Setting the derivative to zero to find the critical point:
\begin{equation}
\sum_{j=1}^{N} \mathbf{W}_{i j} (\boldsymbol{\alpha}_{c, j} \mathbf{B}_{i j}) = \boldsymbol{\alpha}_{r, i} \sum_{j=1}^{N} (\boldsymbol{\alpha}_{c, j} \mathbf{B}_{i j})^2
\end{equation}
Solving for \(\boldsymbol{\alpha}_{r, i}\), we obtain:
\begin{equation}
\boldsymbol{\alpha}_{r, i} = \frac{\sum_{j=1}^{N} (\boldsymbol{\alpha}_{c, j} \cdot \mathbf{B}_{i j}) \cdot \mathbf{W}_{i j}}{\sum_{j=1}^{N} (\boldsymbol{\alpha}_{c, j} \cdot \mathbf{B}_{i j})^2 + \epsilon}
\end{equation}
where \(\epsilon\) is added for numerical stability. This matches Eq.~\eqref{equa:row-scale-update}.
Due to the symmetry of the rank-1 structure, the derivation for the column scale \(\boldsymbol{\alpha}_c\) follows an identical logic by taking the derivative w.r.t \(\boldsymbol{\alpha}_{c, j}\), yielding the column-scale expression in Eq.~\eqref{equa:row-scale-update}.
\end{proof}

\subsection{Derivation of Joint Binary Basis Optimization}

When \(K_b \ge 2\), we optimize the binary bases \(\mathbf{B}^{(1:K_b)}\) jointly.

\begin{proof}[\textbf{Proof of Proposition~3.3.}]
We define the magnitude tensor for the \(k\)-th basis as \(\mathbf{M}^{(k)} = \boldsymbol{\alpha}_r^{(k)} \boldsymbol{\alpha}_c^{(k)\top}\). The global reconstruction error is:
\begin{equation}
\mathcal{L} = \left\| \mathbf{W} - \sum_{k=1}^{K_b} \mathbf{M}^{(k)} \odot \mathbf{B}^{(k)} \right\|_F^2
\end{equation}
Crucially, the elements of \(\mathbf{B}^{(k)}\) are independent across spatial coordinates \((i,j)\). Thus, minimizing the global Frobenius norm is equivalent to minimizing the squared error for each element independently:
\begin{equation}
\fiteq{
\min_{\{\mathbf{B}^{(k)}\}} \mathcal{L} \iff \sum_{i,j} \min_{b_{ij}^{(1)}, \dots, b_{ij}^{(K_b)}} \left( \mathbf{W}_{ij} - \sum_{k=1}^{K_b} \mathbf{M}_{ij}^{(k)} \cdot b_{ij}^{(k)} \right)^2
}
\end{equation}
For a specific coordinate \((i,j)\), let \(\mathbf{b} = [b^{(1)}, \dots, b^{(K_b)}]^\top \in \{-1, +1\}^{K_b}\) be the vector of binary choices. The problem reduces to:
\begin{equation}
\fiteq{
\mathbf{b}^* = \underset{\mathbf{b} \in\{-1,+1\}^{K_b}}{\operatorname{argmin}} \left( \mathbf{W}_{i j} - \sum_{k=1}^{K_b} \mathbf{M}_{i j}^{(k)} \cdot b^{(k)} \right)^2
}
\end{equation}
Since the square function is monotonic for positive errors, minimizing the squared L2 error is equivalent to minimizing the absolute L1 error in this scalar context. Thus, we have:
\begin{equation}
\fiteq{
\mathbf{B}_{i j}^{(1: K_b)} = \underset{\mathbf{b} \in\{-1,+1\}^{K_b}}{\operatorname{argmin}} \left| \mathbf{W}_{i j} - \sum_{k=1}^{K_b} \mathbf{M}_{i j}^{(k)} \cdot b_k \right|
}
\end{equation}
This confirms that the element-wise exhaustive search (Eq.~\eqref{equa:B-update}) yields the globally optimal binary assignment for the fixed scaling factors.
\end{proof}
\section{Algorithmic Descriptions}
\label{sec:algo_desc}

FluxBin is seamlessly integrated into the GPTQ framework~\cite{gptq}, adopting a block-wise reconstruction paradigm to ensure high inference efficiency and quantization fidelity. By leveraging the Optimal Brain Quantization (OBQ) theory, we utilize the inverse Hessian matrix $\mathbf{H}^{-1}$ not only for structural saliency search but also for compensating the quantization error accumulating across columns. The Hessian is computed efficiently using calibration data via $\mathbf{H} = 2\mathbf{X}\mathbf{X}^T$, and its inverse is updated iteratively, ensuring the process remains computationally lightweight.

We provide two algorithmic variants corresponding to the configurations evaluated in the main paper:

\textbf{FluxBin Base (Algorithm~\ref{alg:fluxbin_base}).} 
This algorithm describes the standard global base quantization process without mixed-precision refinement. It iterates through column groups, performing the proposed {Row-Column Binary Decomposition} to obtain the quantized block $\widehat{\mathbf{W}}_{blk}$. The residual error is then propagated to the remaining unquantized weights using the Hessian inverse. This algorithm is utilized for pure binary configurations, specifically {2b-g128} ($K_b=2$) and {4b-g128} ($K_b=4$), where no salient refinement is applied.

\textbf{FluxBin Hybrid Base (Algorithm~\ref{alg:fluxbin_hybrid}).} 
This algorithm details the salient-aware hybrid base (mixed-precision) strategy. Within each block, it first identifies critical columns via {StructuralSearch} using Hessian metrics. Following the global base decomposition, it performs a secondary decomposition on the residual $\mathbf{R}_{blk}$ restricted to the masked salient columns. This algorithm is utilized for our primary hybrid configurations, specifically {2b-s8-g128} ($s=8$) and {2b-s16-g128} ($s=16$).

\textbf{Sub-routines (Algorithm~\ref{alg:subroutines}).} 
This section defines the core components: {RC\_BinaryDecomp}, which solves the optimization problem for decoupled scaling factors and binary bases (Eq.~\eqref{equa:row-scale-update}-\eqref{equa:B-update}), and {StructuralSearch}, which computes the column-wise sensitivity score (Eq.~\eqref{equa:saliency-score-column}).

\begin{figure*}[h]
    \centering
    \small
    
    \begin{minipage}[t]{0.48\linewidth}
        \begin{algorithm}[H]
            \caption{FluxBin-Base (Global Base)}
            \label{alg:fluxbin_base}
            \begin{algorithmic}[1]
                \STATE \textbf{Input:} $\mathbf{W}$, Calib $\mathbf{X}$, Group $g$, Bases $K_b$, Iter $T$
                \STATE \textbf{Output:} $\widehat{\mathbf{W}}$, Params $\Theta$
                
                \STATE $\mathbf{H} \leftarrow 2\mathbf{X}\mathbf{X}^T$ \COMMENT{Hessian from input}
                \STATE $\mathbf{H}^{-1} \leftarrow \text{CholeskyInverse}(\mathbf{H})$
                \STATE $\mathbf{Q} \leftarrow \mathbf{0}$
                
                \FOR{$j = 0, g, \dots, N-g$}
                    \STATE $\mathbf{W}_{blk} \leftarrow \mathbf{W}_{:, j:j+g}$
                    \STATE $\mathbf{H}^{-1}_{blk} \leftarrow (\mathbf{H}^{-1})_{j:j+g, j:j+g}$
                    
                    \STATE \COMMENT{Global Base Decomp}
                    \STATE $\widehat{\mathbf{W}}_{blk}, \Theta \leftarrow \text{\bf RC\_Decomp}(\mathbf{W}_{blk}, \mathbf{1}, K_b, T)$
                    
                    \STATE $\mathbf{Q}_{:, j:j+g} \leftarrow \widehat{\mathbf{W}}_{blk}$
                    
                    \STATE \COMMENT{Error Compensation}
                    \STATE $\mathbf{E} \leftarrow \mathbf{W}_{blk} - \widehat{\mathbf{W}}_{blk}$
                    \STATE Update $\mathbf{W}_{remaining}$ via $\mathbf{E}, \mathbf{H}^{-1}$
                \ENDFOR
                \STATE \textbf{return} $\mathbf{Q}, \Theta$
            \end{algorithmic}
        \end{algorithm}
    \end{minipage}
    \hfill
    \begin{minipage}[t]{0.48\linewidth}
        \begin{algorithm}[H] 
            \caption{FluxBin-Hybrid (Salient-Aware)}
            \label{alg:fluxbin_hybrid}
            \begin{algorithmic}[1]
                \STATE \textbf{Input:} $\mathbf{W}, \mathbf{X}, g$, Bases $K_b, K_s$, Salient $N_{sal}, T$
                \STATE \textbf{Output:} $\widehat{\mathbf{W}}$, $\Theta$, $\Phi$, Mask $\mathcal{M}$
                
                \STATE $\mathbf{H} \leftarrow 2\mathbf{X}\mathbf{X}^T$; $\mathbf{H}^{-1} \leftarrow \text{CholInv}(\mathbf{H})$
                \FOR{$j = 0, g, \dots, N-g$}
                    \STATE $\mathbf{W}_{blk} \leftarrow \mathbf{W}_{:, j:j+g}$
                    \STATE $\mathbf{H}^{-1}_{blk} \leftarrow (\mathbf{H}^{-1})_{j:j+g, j:j+g}$
                    
                    \STATE \COMMENT{Step 1: Saliency Search}
                    \STATE $\mathcal{I} \leftarrow \text{\bf StructuralSearch}(\mathbf{W}_{blk}, \mathbf{H}^{-1}_{blk}, N_{sal})$
                    \STATE Construct mask $\mathcal{M}_{blk}$ from $\mathcal{I}$
                    
                    \STATE \COMMENT{Step 2: Global Base}
                    \STATE $\widehat{\mathbf{W}}_{g}, \Theta \leftarrow \text{\bf RC\_Decomp}(\mathbf{W}_{blk}, \mathbf{1}, K_b, T)$
                    
                    \STATE \COMMENT{Step 3: Salient Refinement}
                    \STATE $\mathbf{R}_{blk} \leftarrow \mathbf{W}_{blk} - \widehat{\mathbf{W}}_{g}$
                    \STATE $\widehat{\mathbf{W}}_{s}, \Phi \leftarrow \text{\bf RC\_Decomp}(\mathbf{R}_{blk}, \mathcal{M}_{blk}, K_s, T)$
                    
                    \STATE $\widehat{\mathbf{W}}_{blk} \leftarrow \widehat{\mathbf{W}}_{g} + \widehat{\mathbf{W}}_{s}$
                    
                    \STATE \COMMENT{Step 4: Compensation}
                    \STATE $\mathbf{E} \leftarrow \mathbf{W}_{blk} - \widehat{\mathbf{W}}_{blk}$
                    \STATE Update $\mathbf{W}_{remaining}$ via $\mathbf{E}, \mathbf{H}^{-1}$
                \ENDFOR
                \STATE \textbf{return} $\widehat{\mathbf{W}}, \Theta, \Phi, \mathcal{M}$
            \end{algorithmic}
        \end{algorithm}
    \end{minipage}
\end{figure*}

\begin{algorithm}[h]
  \caption{FluxBin Sub-routines}
  \label{alg:subroutines}
  
  \begin{minipage}[t]{0.54\linewidth}
    \begin{algorithmic}[1]
        \STATE \textbf{Function} RC\_Decomp($\mathbf{X}, \text{mask}, K, T$)
        \STATE \quad Init residual $\mathbf{R} \leftarrow \mathbf{X} \odot \text{mask}$
        \STATE \quad $\widehat{\mathbf{X}} \leftarrow \mathbf{0}$
        
        \STATE \quad // \textit{Init via Greedy Residual}
        \FOR{$k = 1$ \textbf{to} $K$}
            \STATE Init $\mathbf{B}^{(k)}, \boldsymbol{\alpha}_r^{(k)}, \boldsymbol{\alpha}_c^{(k)}$ from $\mathbf{R}$
            \STATE $\mathbf{R} \leftarrow \mathbf{R} - (\boldsymbol{\alpha}_r^{(k)} \boldsymbol{\alpha}_c^{(k)\top}) \odot \mathbf{B}^{(k)}$
        \ENDFOR
          
        \STATE \quad // \textit{Alternating Optimization}
        \FOR{$iter = 1$ \textbf{to} $T$}
            \FOR{$k = 1$ \textbf{to} $K$}
                \STATE Update $\boldsymbol{\alpha}_r^{(k)}$ (Eq.~\ref{equa:row-scale-update})
                \STATE Update $\boldsymbol{\alpha}_c^{(k)}$ (Eq.~\ref{equa:row-scale-update})
            \ENDFOR
            \STATE Update $\{\mathbf{B}^{(k)}\}$ via search (Eq.~\ref{equa:B-update})
            \FOR{$k = 1$ \textbf{to} $K$}
                \STATE $\mathbf{B}^{(k)} \leftarrow \mathbf{B}^{(k)} \odot \text{mask}$ 
            \ENDFOR
        \ENDFOR
        
        \STATE \quad Reconstruct $\widehat{\mathbf{X}}$
        \STATE \quad \textbf{return} $\widehat{\mathbf{X}}, \{\boldsymbol{\alpha}, \mathbf{B}\}$
        \STATE \textbf{End Function}
    \end{algorithmic}
  \end{minipage}%
  \hfill
  \begin{minipage}[t]{0.44\linewidth}
    \begin{algorithmic}[1]
        \STATE \textbf{Function} StructuralSearch
        \STATE \quad \textbf{Input:} $\mathbf{W}, \mathbf{H}^{-1}, N_{sal}$
        \STATE \quad // \textit{Compute Sensitivity}
        \FOR{$j = 1$ \textbf{to} $g$}
            \STATE $S_j = \sum_{i} \frac{\mathbf{W}_{ij}^2}{[\mathbf{H}^{-1}]_{jj}^2}$
        \ENDFOR
        \STATE \quad $\mathcal{I} \leftarrow \text{TopK}(\{S_j\}, N_{sal})$
        \STATE \quad \textbf{return} $\mathcal{I}$
        \STATE \textbf{End Function}
    \end{algorithmic}
  \end{minipage}
\end{algorithm}

\section{Cost Estimation}

\subsection{Compute Complexity Analysis of LUT-base Kernel}
\label{sec:appendix-complexity}

To quantify the computational efficiency of FluxBin, we analyze the theoretical complexity of our LUT-based kernel. Following the standard analysis in LUT-GEMM~\cite{lut-gemm}, the total cost \(C\) consists of the LUT construction cost (\(C_{build}\)) and the table lookup cost (\(C_{read}\)).

Assume the weight matrix has dimensions \(M \times N\), the sub-vector length is \(\mu\), and the number of binary bases is \(K\). Since the column scaling factor \(\boldsymbol{\alpha}_c^{(k)}\) is unique for each basis \(k\) and is fused into the input, independent LUTs must be constructed for each basis.

\textbf{1. Global Base Complexity.} 
For the global base branch with \(K_b\) bases, the input dimension is \(N\).
\begin{equation}
\begin{aligned}
C_{build}^{\text{base}} &= K_b \cdot \frac{N}{\mu} \cdot 2^\mu \\
C_{read}^{\text{base}} &= K_b \cdot M \cdot \frac{N}{\mu}
\end{aligned}
\end{equation}
Here, \(\frac{N}{\mu}\) represents the number of sub-vectors, and \(2^\mu\) is the number of entries per LUT.

\textbf{2. Salient Refinement Complexity.} 
For the salient refinement branch, the operation is restricted to \(N_{sal}\) columns with \(K_s\) bases.
\begin{equation}
\begin{aligned}
C_{build}^{\text{sal}} &= K_s \cdot \frac{N_{sal}}{\mu} \cdot 2^\mu \\
C_{read}^{\text{sal}} &= K_s \cdot M \cdot \frac{N_{sal}}{\mu}
\end{aligned}
\end{equation}

\textbf{3. Total Complexity.} 
The overall complexity is the sum of both branches:
\begin{equation}
\fiteq{
C_{total} = \underbrace{\frac{2^\mu}{\mu} (K_b N + K_s N_{sal})}_{\text{LUT Build}} + \underbrace{\frac{M}{\mu} (K_b N + K_s N_{sal})}_{\text{LUT Read}}
}
\label{equa:complexity-full}
\end{equation}

In typical generative LLM scenarios, the output channel dimension \(M\) is significantly larger than the LUT size \(2^\mu\) (e.g., \(M \gg 2^\mu\)). Consequently, the lookup cost \(C_{read}\) dominates the total runtime. The complexity can be approximated as:
\begin{equation}
C_{total} \approx \mathcal{O}\left( \frac{M}{\mu} (K_b N + K_s N_{sal}) \right)
\label{equa:complexity-approx}
\end{equation}

\textbf{Efficiency Gain.} 
Compared to standard FP16 matrix multiplication which has a complexity of \(\mathcal{O}(MN)\), FluxBin achieves a theoretical reduction factor of approximately \(\frac{\mu}{K_{b}}\). Furthermore, since the salient refinement typically involves a very sparse subset of columns (\(N_{sal} \ll N\)), the additional overhead introduced by the hybrid precision is marginal, preserving the high-throughput advantage of the LUT-based paradigm.

\subsection{Storage Consumption Analysis}
\label{sec:appendix-storage}

We conduct a comprehensive analysis of the storage overhead introduced by FluxBin, evaluating the Total Storage (\(S_{\text{total}}\)) and the Effective Average Bit-Width (ABW). Let \(M\) and \(N\) denote the number of rows and columns of the weight matrix \(\mathbf{W}\), respectively. Our configuration divides columns into groups of size \(g\), and selects \(s\) salient columns per group.

The quantization model consists of \(K_b\) binary bases for the global approximation and \(K_s\) bases for the salient refinement. We assume all scaling factors (row and column) and indices are stored in FP16 format (16 bits).

\textbf{1. Global Base Storage (\(S_{\text{base}}\)).}
The global base component stores the dense binary weights, column-wise scaling factors, and block-wise row scaling factors. The storage cost (in bits) is calculated as:
\begin{equation}
\fiteq{
S_{\text{base}} = K_b \cdot \left( \underbrace{MN}_{\text{Weights}} + \underbrace{16N}_{\text{Col Scales}} + \underbrace{16 \cdot M \cdot \frac{N}{g}}_{\text{Row Scales}} \right)
}
\label{eq:storage_base}
\end{equation}
where the term \(M \cdot \frac{N}{g}\) reflects our strategy of storing one row scale per group to capture local magnitude variations.

\textbf{2. Salient Refinement Storage (\(S_{\text{sal}}\)).}
The salient refinement targets a subset of columns, denoted as \(N_{\text{sal}} = \frac{N}{g} \cdot s\). This component requires storage for the sparse binary refinements, their corresponding scales, and the \textit{indexing metadata}.
\begin{equation}
\fiteq{
S_{\text{sal}} = K_s \cdot \left( MN_{\text{sal}} + 16N_{\text{sal}} + 16 \cdot M \cdot \frac{N}{g} \right) + S_{\text{idx}}
}
\label{eq:storage_sal}
\end{equation}
The term \(S_{\text{idx}}\) accounts for the overhead of storing the indices of salient columns. Since we select \(s\) indices for each of the \(N/g\) groups, the overhead is:
\begin{equation}
S_{\text{idx}} = 16 \cdot N_{\text{sal}}
\end{equation}
Note that we use 16 bits per index to ensure memory alignment and support large column dimensions, although 8 bits could suffice for smaller group sizes (e.g., \(g \le 256\)).

\textbf{3. Effective Average Bit-Width (ABW).}
Aggregating the components from Eq.~\eqref{eq:storage_base} and Eq.~\eqref{eq:storage_sal}, the total storage is \(S_{\text{total}} = S_{\text{base}} + S_{\text{sal}}\). The effective average bit-width is derived as:
\begin{equation}
\text{ABW} = \frac{S_{\text{total}}}{MN} = \frac{S_{\text{base}} + S_{\text{sal}}}{MN}
\label{eq:abw}
\end{equation}

For our typical configuration (e.g., \(K_b=2\), \(K_s=2\), \(g=128\), \(s=8\)), the salient branch introduces a sparsity ratio of \(s/g = 6.25\%\). The indexing overhead \(S_{\text{idx}}\) is amortized over the large matrix dimensions (\(\approx \frac{16 \cdot s/g}{M}\) bits per weight), so the effective ABW settles at \(\approx 2.63\) bits (and \(\approx 2.75\) bits for \(s=16\)) while significantly boosting model accuracy.

\subsection{Maximum Memory Consumption}
\label{sec:appendix-memory}
Table \ref{tab:max-memory-512} presents the peak GPU memory allocation during inference.

\textbf{Comparison with FP16 Baseline:} FluxBin significantly reduces memory footprints compared to the FP16 baseline. For instance, on Qwen3-32B, the memory usage drops from 62.7GB (FP16) to 12.1GB (2b-g128), achieving a \(5.1 \times \)compression ratio. Notably, for LLaMA-2-70B, which triggers an Out-Of-Memory (OOM) error in FP16 on our testing hardware, FluxBin enables successful inference with only 20.2GB of memory in the 2-bit configuration, demonstrating its capability to democratize large model deployment on consumer-grade GPUs. Furthermore, FluxBin consistently achieves comparable memory usage to other 2-bit baselines like AQLM and QuIP\#.

\textbf{Impact of Model Configuration:}
The memory consumption correlates with the complexity of the binary decomposition:
\begin{itemize}
\item Binary Bases: Increasing the global quantization precision from 2-bit to 4-bit naturally increases memory usage. For LLaMA-2-7B, the usage rises from 2.9GB (2b-g128) to 4.7GB (4b-g128), reflecting the storage of additional binary bases.
\item Salient Refinement: Incorporating salient refinement (s8/s16) introduces a moderate memory overhead compared to the standard 2b configuration due to the storage of the auxiliary dense matrix and indices. However, it is worth noting that the difference between 2b-s8 and 2b-s16 is negligible (e.g., \(< 1\) MB for LLaMA-2-7B). This confirms that our Virtual Columnar Mapping strategy is highly memory-efficient: doubling the number of salient columns incurs minimal additional cost, as the overhead is dominated by fixed allocations (e.g., KV cache and activation buffers) rather than the compact salient weights themselves.
\end{itemize}
\begin{table*}[!htbp]
  \centering
  \caption{Max GPU Memory (MB) Allocated in Inference with Input Length 512 and Output Length 512.}
  \setlength{\tabcolsep}{6pt}
  \resizebox{\textwidth}{!}{\begin{tabular}{lccc cccc}
    \toprule
    & \multicolumn{3}{c}{\textbf{Baselines}}
    & \multicolumn{4}{c}{\textbf{FluxBin}} \\
    \cmidrule(lr){2-4} \cmidrule(lr){5-8}

    \textbf{Model Name}
    & 
    \textbf{FP16}
    & \textbf{AQLM 2bit}
    & \textbf{QuIP\# 2bit}
    & \textbf{2b-g128}
    & \textbf{2b-s8-g128}
    & \textbf{2b-s16-g128}
    & \textbf{4b-g128} \\
    \midrule

    LLaMA-2-7B
    & 13330.61 & 5247.35 & 3461.87
    & 2895.24 & 3474.30 & 3475.21 & 4684.38 \\

    LLaMA-2-13B
    & 25681.85 & 4058.02 & 5785.77
    & 4558.59 & 6172.64 & 6173.32 & 8474.27 \\

    LLaMA-2-70B
    & OOM & 20415.70 & 19360.84
    & 20251.61 & 26393.52 & 26393.13 & 38633.95 \\

    Qwen3-8B
    & 15779.09 & -- & --
    & 4557.90 & 5185.48 & 5187.33 & 6444.57 \\

    Qwen3-14B
    & 28322.55 & -- & --
    & 7043.43 & 8215.49 & 8221.31 & 10486.19 \\

    Qwen3-32B
    & 62720.77 & -- & --
    & 12135.80 & 14993.71 & 14995.25 & 20323.56 \\

    \bottomrule
  \end{tabular}}
  \label{tab:max-memory-512}
\end{table*}

\section{Extension Experiment}

\subsection{Extension on Throughput}
\label{sec:throughput_extension}

To comprehensively validate the generality of FluxBin, we extend our throughput evaluation to two representative model families: Qwen3 (8B, 14B, 32B), which features Grouped-Query Attention (GQA), and LLaMA-3.1 (8B, 70B), representing the latest generation of massive-scale models.

As illustrated in Figure~\ref{fig:throughput-llama3} and Figure~\ref{fig:throughput-qwen3}, the results exhibit remarkable consistency. Crucially, the throughput for hybrid precision configurations ({2b-s8-g128} and {2b-s16-g128}) remains virtually indistinguishable from the pure 2-bit baseline ({2b-g128}) across all cases. This empirical evidence reinforces two critical attributes of FluxBin:

\begin{itemize}
    \item \textbf{Architectural Robustness:} FluxBin proves to be architecture-agnostic. Whether applied to standard architectures or modern designs utilizing GQA (as in Qwen3), our kernel integrates seamlessly, delivering consistent acceleration without being hindered by specific architectural variations.
    
    \item \textbf{Scalability to Massive Models:} The method demonstrates exceptional scalability. On the massive LLaMA-3.1-70B, FluxBin maintains the same high-efficiency profile as on smaller models. This confirms that the overhead of our mixed-precision strategy does not grow disproportionately with model size, making it an ideal solution for deploying large-scale foundation models.
\end{itemize}

\begin{figure*}[t]
    \centering
    \includegraphics[width=0.72\linewidth]{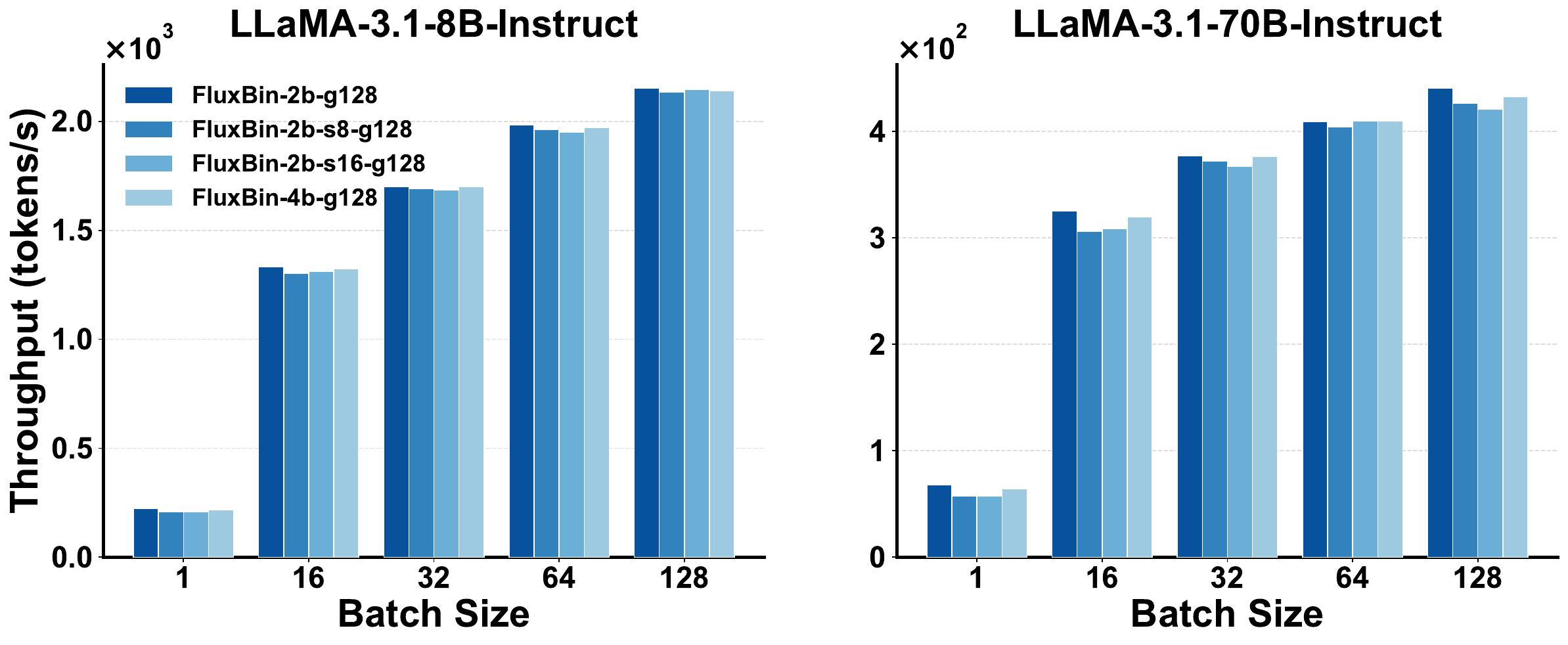}
    \caption{Throughput Comparison on LLaMA-3.1 Across All Batch Sizes}
    \label{fig:throughput-llama3}
\end{figure*}

\begin{figure*}[t]
    \centering
    \includegraphics[width=0.85\linewidth]{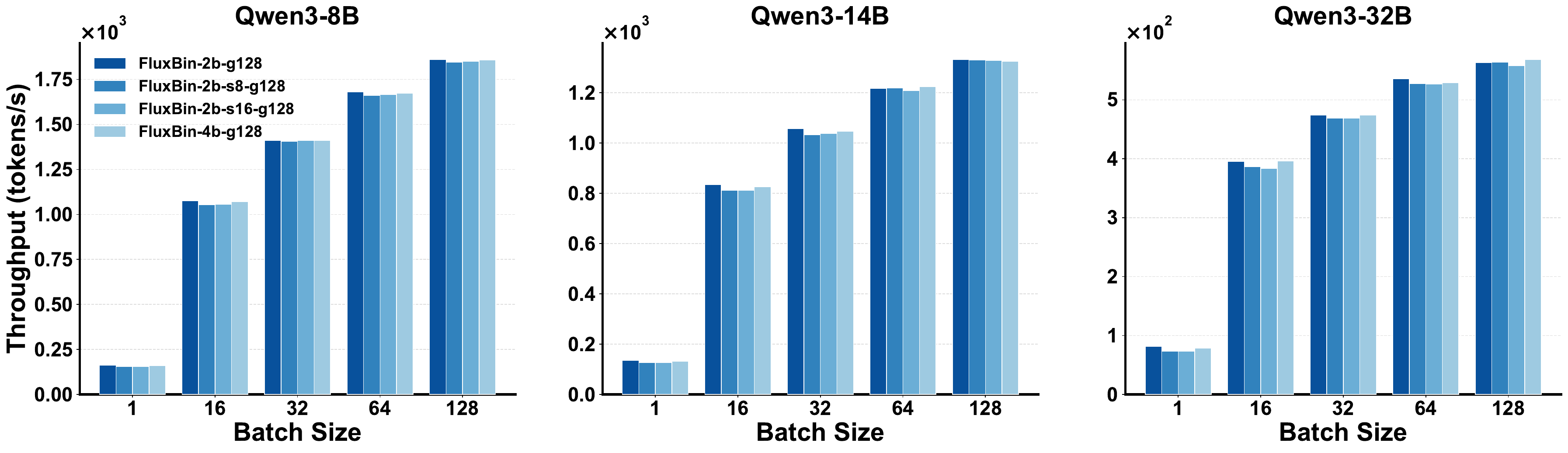}
    \caption{Throughput Comparison on Qwen3 Across All Batch Sizes}
    \label{fig:throughput-qwen3}
\end{figure*}

\subsection{Extended Evaluation on Speed vs. Accuracy}
\label{sec:extension_eval}

To comprehensively validate the generalization capability of FluxBin across diverse and modern LLM architectures, we extend our evaluation to the {LLaMA-3.1} family (8B-Instruct and 70B) and the {Qwen3} family (8B and 14B). The results are summarized in Table~\ref{tab:llama31-task} and Table~\ref{tab:qwen3-task}.

\textbf{Robustness on LLaMA-3.1.} 
LLaMA-3.1 poses significant challenges for low-bit quantization due to its complex activation distributions and high sensitivity. As shown in Table~\ref{tab:llama31-task}, existing 2-bit methods like FlexRound~\cite{FlexRound} and AQLM suffer severe accuracy degradation, particularly on the 70B model where their average accuracy drops to 36.18\% and 50.27\%, respectively. In stark contrast, FluxBin maintains exceptional fidelity. Our {2b-s16-g128} configuration achieves an average accuracy of {67.81\%}, outperforming AQLM by {+17.5\%} and FlexRound by {+31.6\%}. This underscores the effectiveness of our Hessian-guided hybrid base strategy in preserving critical information even in the most sensitive models.
In terms of inference speed, FluxBin continues to lead. On LLaMA-3.1-8B, it reaches {217.03 tokens/s}, surpassing AQLM (124.50 tokens/s) by {1.74$\times$}. Even on the massive 70B model, FluxBin achieves 55.69 tokens/s, significantly faster than AQLM (19.00 tokens/s) and competitive with the higher-precision GPTQ (3-bit).

\textbf{Adaptability to Qwen3 Architectures.}
We further evaluate FluxBin on the Qwen3 family to assess its adaptability to architectures employing Grouped-Query Attention (GQA). As detailed in Table~\ref{tab:qwen3-task}, FluxBin demonstrates strong performance retention. Compared to the SLiM-LLM baseline, FluxBin achieves substantial accuracy gains (e.g., +18.3\% on 8B Avg). More importantly, despite using a lower bit-width ($\sim$2.75 bits), FluxBin's accuracy is highly competitive with the 3.125-bit GPTQ baseline (e.g., 69.39\% vs. 70.45\% on 14B), while offering the theoretical throughput advantages of binary kernels. This confirms that FluxBin's decoupled row-column decomposition and salient-aware refinement are architecture-agnostic, effectively handling modern LLM designs.

\begin{table*}[!htbp]
\centering
\small
\caption{Comparison of quantization methods on LLaMA-3.1 models.
Higher accuracy and speed are better ($\uparrow$). $^{*}$ HS = HellaSwag, WG = Winogrande. 
$^{\dagger}${MMLU:} Evaluated using 5-shot in-context learning. 
We highlight the \colorbox{rank1}{best}, \colorbox{rank2}{second}, and \colorbox{rank3}{third} results (excluding FP16).}
\label{tab:llama31-task}
\setlength{\tabcolsep}{4pt}
\renewcommand{\arraystretch}{1.1}
\resizebox{\textwidth}{!}{\begin{tabular}{l l c c c c c c c c}
\toprule
\textbf{LLaMA-3.1} & \textbf{Method} & \textbf{\#W bit} &
\textbf{Speed (tok/s)} $\uparrow$ &
\textbf{ARC-e} $\uparrow$ &
\textbf{ARC-c} $\uparrow$ &
\textbf{HS}$^{*}$ $\uparrow$ &
\textbf{WG}$^{*}$ $\uparrow$ &
\textbf{MMLU}$^{\dagger}$ $\uparrow$ &
\textbf{Avg.} $\uparrow$ \\
\midrule
\multirow{6}{*}{8B-Instruct}
& FP16                     & 16    & 41.51  & 82.15 & 52.05 & 60.77 & 74.35 & 69.40 & 67.74 \\
& FlexRound                & 2.125 & - & 24.75 & 24.57 & 43.78 & 55.16 & 24.27 & 34.51 \\
& AQLM                     & 2     & \third{124.50} & 46.25 & 30.89 & \best{61.40} & 58.25 & 42.29 & 47.82 \\
& GPTQ                     & 3.125 & --     & \best{72.14} & \best{37.97} & \secondbest{53.73} & \best{68.75} & \best{46.90} & \best{55.90} \\
& FluxBin-2b-s8-g128       & 2.63  & \secondbest{217.02} & \third{69.23} & \third{35.41} & \third{45.16} & \third{68.19} & \third{43.90} & \third{52.38} \\
& FluxBin-2b-s16-g128      & 2.75  & \best{217.03} & \secondbest{69.91} & \secondbest{36.77} & 44.49 & \secondbest{68.43} & \secondbest{45.15} & \secondbest{52.95} \\
\midrule
\multirow{6}{*}{70B}
& FP16                     & 16    & OOM    & 86.66 & 64.93 & 85.03 & 79.64 & 78.58 & 78.97 \\
& FlexRound                & 2.125 & -  & 25.13 & 24.83 & 50.67 & 53.59 & 26.70 & 36.18 \\
& AQLM                     & 2     & \third{19.00}  & 48.82 & 28.67 & 52.83 & 59.59 & 61.45 & 50.27 \\
& GPTQ                     & 3.125 & --     & \third{80.72} & \secondbest{51.79} & \best{63.63} & \best{79.72} & \best{72.87} & \best{69.75} \\
& FluxBin-2b-s8-g128       & 2.63  & \best{58.38}  & \secondbest{81.27} & \third{49.66} & \secondbest{58.06} & \third{79.01} & \third{64.71} & \third{66.54} \\
& FluxBin-2b-s16-g128      & 2.75  & \secondbest{55.69}  & \best{82.73} & \best{53.24} & \third{57.43} & \secondbest{79.48} & \secondbest{66.17} & \secondbest{67.81} \\
\bottomrule
\end{tabular}}
\end{table*}

\begin{table*}[!htbp]
\centering
\small
\caption{Comparison of quantization methods on Qwen3 models.
Higher accuracy and speed are better ($\uparrow$). $^{*}$ HS = HellaSwag, WG = Winogrande. 
We highlight the \colorbox{rank1}{best}, \colorbox{rank2}{second}, and \colorbox{rank3}{third} results (excluding FP16).
}
\label{tab:qwen3-task}
\setlength{\tabcolsep}{4pt}
\renewcommand{\arraystretch}{1.1}
\resizebox{\textwidth}{!}{\begin{tabular}{l l c c c c c c c c c c}
\toprule
\textbf{Qwen3} & \textbf{Method} & \textbf{\#W bit} &
\textbf{Speed (tok/s)} $\uparrow$ &
\textbf{Wiki} $\downarrow$ &
\textbf{PIQA} $\uparrow$ &
\textbf{ARC-e} $\uparrow$ &
\textbf{ARC-c} $\uparrow$ &
\textbf{BoolQ} $\uparrow$ &
\textbf{HS}$^{*}$ $\uparrow$ &
\textbf{WG}$^{*}$ $\uparrow$ &
\textbf{Avg.} $\uparrow$ \\
\midrule
\multirow{5}{*}{8B}
& FP16                     & 16    & 28.60  & 9.72 & 76.39 & 83.29 & 55.38 & 86.57 & 57.16 & 68.03 & 71.14 \\
& SLiM-LLM                 & 2     & -      & 30.61 & 60.11 & 43.81 & 23.97 & 62.99 & 34.83 & 55.80 & 46.92 \\
& GPTQ                     & 3.125 & -      & \best{11.18} & \third{74.16} & \best{77.90} & \best{48.63} & \best{84.07} & \best{53.42} & \third{65.43} & \best{67.27} \\
& FluxBin-2b-s8-g128       & 2.63  & \secondbest{121.85} & \third{13.46} & \best{74.37} & \third{74.41} & \third{42.58} & \secondbest{83.33} & \third{47.90} & \secondbest{66.30} & \third{64.82} \\
& FluxBin-2b-s16-g128      & 2.75  & \best{121.97} & \secondbest{12.44} & \secondbest{74.21} & \secondbest{74.71} & \secondbest{43.77} & \third{82.75} & \secondbest{49.09} & \best{66.77} & \secondbest{65.22} \\
\midrule
\multirow{5}{*}{14B}
& FP16                     & 16    & 26.54  & 8.64 & 80.36 & 84.13 & 58.62 & 89.42 & 60.94 & 72.69 & 74.36 \\
& Slim-LLm                 & 2     & -      & 22.85 & 61.83 & 52.54 & 29.35 & 61.20 & 31.52 & 52.04 & 48.08 \\
& GPTQ                     & 3.125 & -      & \best{9.73} & \best{77.42} & \best{79.12} & \best{52.82} & \third{86.64} & \best{58.02} & \third{68.67} & \best{70.45} \\
& FluxBin-2b-s8-g128       & 2.63  & \secondbest{102.07} & \third{11.60} & \third{74.59} & \third{77.74} & \secondbest{51.96} & \best{87.43} & \secondbest{53.96} & \secondbest{69.30} & \third{69.16} \\
& FluxBin-2b-s16-g128      & 2.75  & \best{102.09} & \secondbest{11.42} & \secondbest{76.17} & \secondbest{78.66} & \third{51.62} & \secondbest{86.70} & \third{53.74} & \best{69.46} & \secondbest{69.39} \\
\bottomrule
\end{tabular}}
\end{table*}

\subsection{Impact of Group Size}
\label{sec:ablation_groupsize}

We investigate the impact of group size (g) on the trade-off between model accuracy and effective bit-width. As detailed in Table~\ref{tab:ablation-groupsize}, we evaluate three granularities (g=\{64, 128, 256\}) across pure and hybrid configurations.

\textbf{Sensitivity of Global Base.} The pure binary configuration ({2b}) exhibits high sensitivity to group granularity. Increasing $g$ from 64 to 256 results in a severe degradation in average accuracy, dropping from 57.96\% to 47.20\%. This indicates that a coarse-grained global binary approximation struggles to capture the local weight variance without refinement.

\textbf{Robustness of Hybrid Bases.} In contrast, our salient-aware hybrid configurations ({2b-s8} and {2b-s16}) demonstrate remarkable robustness to larger group sizes. For instance, {2b-s16} maintains a high average accuracy of 58.87\% even at g=256, significantly outperforming the pure {2b} baseline at g=64 (57.96\%). This confirms that our salient refinement effectively compensates for the information loss caused by coarse-grained grouping.

\textbf{Optimal Configuration.} While g=64 yields the highest accuracy, it inflates the effective bit-width to $\sim$3.5 bits (for s16). Conversely, g=128 strikes the optimal balance, achieving competitive accuracy (59.15\%) with a compact footprint ($\sim$2.75 bits), justifying its selection as our default setting.
\begin{table*}[t]
\centering
\caption{Ablation study across different group sizes (g) in fixed configurations of LLaMA-2-7B. Lower Wiki PPL is better, while higher accuracy is better for downstream tasks. $^{\dagger}$ The average bit includes weight and scale factors. The calculation for FluxBin is derived in Sec.~\ref{sec:appendix-storage}. 
$^{\ddagger}$ HS = HellaSwag, WG = Winogrande.}
\label{tab:ablation-groupsize}
\setlength{\tabcolsep}{6pt}
\renewcommand{\arraystretch}{1.1}
\small
\resizebox{\textwidth}{!}{\begin{tabular}{c c c c c c c c c c}
\toprule
\textbf{Config.} & \textbf{W bit}$^{\dagger}$ &
\textbf{Wiki} $\downarrow$ & \textbf{PIQA} $\uparrow$ & \textbf{ARC-e} $\uparrow$ &
\textbf{ARC-c} $\uparrow$ & \textbf{BoolQ} $\uparrow$ &
\textbf{HS}$^{\ddagger}$ $\uparrow$ & \textbf{WG}$^{\ddagger}$ $\uparrow$ & \textbf{Avg.} $\uparrow$\\
\midrule
2b-g64   & 2.51 & 10.27 & 73.50 & 63.43 & 30.29 & 67.00 & 48.57 & 64.96 & 57.96 \\
2b-g128  & 2.25 & 13.71 & 69.79 & 56.23 & 25.00 & 65.41 & 38.64 & 56.69 & 51.96 \\
2b-g256  & 2.13 & 12.79 & 64.91 & 48.91 & 22.35 & 62.08 & 34.52 & 50.43 & 47.20 \\
\midrule
2b-s8-g64   & 3.25 & 7.95 & 73.01 & 67.59 & 32.51 & 73.70 & 47.05 & 66.06 & 59.99 \\
2b-s8-g128  & 2.63 & 8.76 & 72.52 & 65.45 & 30.97 & 69.88 & 45.64 & 64.09 & 58.09 \\
2b-s8-g256  & 2.31 & 8.91 & 72.47 & 64.98 & 30.46 & 71.25 & 44.97 & 64.88 & 58.17 \\
\midrule
2b-s16-g64   & 3.50 & 7.40 & 74.76 & 70.54 & 34.22 & 72.72 & 48.26 & 64.96 & 60.91 \\
2b-s16-g128  & 2.75 & 8.20 & 75.50 & 65.40 & 32.42 & 70.80 & 45.97 & 64.80 & 59.15 \\
2b-s16-g256  & 2.38 & 8.64 & 73.34 & 65.74 & 31.48 & 72.66 & 46.06 & 63.93 & 58.87 \\
\bottomrule
\end{tabular}}
\end{table*}

\subsection{Sensitivity to Calibration Sample Size}
\label{sec:ablation_nsample}

We investigate the impact of the calibration c4 dataset size ($N_{cal}$) on both quantization performance and computational overhead for LLaMA-2-7B FluxBin-2b-s8-g128. As shown in Table~\ref{tab:nsample-ablation}, we vary $N_{cal}$ from 64 to 1024. The results reveal a distinct "rise-then-fall" trend in model performance.

\textbf{Hessian Estimation vs. Overfitting.}
The performance trajectory can be explained through the approximation of the Hessian matrix $\mathbf{H} \approx \mathbb{E}[\mathbf{x}\mathbf{x}^T]$, which guides the quantization objective $\Delta \mathcal{L} \approx \frac{1}{2} \Delta \mathbf{w}^T \mathbf{H} \Delta \mathbf{w}$.
\begin{itemize}
    \item \textbf{Under-sampling ($N_{cal} < 256$):} When samples are scarce, the Hessian estimate only captures the principal directions of activation magnitudes. Increasing $N_{cal}$ from 64 to 256 significantly improves the average accuracy (57.86\% $\to$ 58.77\%) and reduces Wiki PPL (8.92 $\to$ 8.63). This gain stems from a more stable and representative estimation of $\mathbf{H}$, allowing the algorithm to better identify sensitive weights.
    \item \textbf{Calibration Overfitting ($N_{cal} > 512$):} Surprisingly, further increasing $N_{cal}$ to 1024 leads to performance degradation (Avg. Acc drops to 58.07\%). This phenomenon is {calibration overfitting}, occurs when the Hessian captures fine-grained distribution details specific to the calibration set that do not generalize to the test distribution. The quantization decision boundary becomes overly biased towards the calibration data, harming generalization.
\end{itemize}

\textbf{Efficiency Trade-off.}
In terms of cost, the PTQ time increases linearly with $N_{cal}$. While $N_{cal}=1024$ incurs a lengthy calibration time ($\sim$75 min), $N_{cal}=256$ offers the optimal trade-off, achieving peak accuracy with a moderate time cost ($\sim$25 min). Consequently, we adopt $N_{cal}=256$ as the default setting for our main experiments.
\begin{table*}[t]
\centering
\caption{Ablation study of the number of calibration samples ({nsample}) on evaluation time and performance on LLaMA-2-7B, with FluxBin configuration 2b-s8-g128. Lower PTQ time and Wiki PPL are better ($\downarrow$), while higher accuracy is better ($\uparrow$). $^{\ddagger}$ HS = HellaSwag, WG = Winogrande.}
\label{tab:nsample-ablation}
\setlength{\tabcolsep}{4pt}
\renewcommand{\arraystretch}{1.1}
\resizebox{\textwidth}{!}{\begin{tabular}{c c c c c c c c c c}
\toprule
\textbf{nsample} & \textbf{Time (min)} $\downarrow$ & \textbf{Wiki} $\downarrow$ & \textbf{PIQA} $\uparrow$ & \textbf{ARC-e} $\uparrow$ &
\textbf{ARC-c} $\uparrow$ & \textbf{BoolQ} $\uparrow$ &
\textbf{HS}$^{\ddagger}$ $\uparrow$ & \textbf{WG}$^{\ddagger}$ $\uparrow$ & \textbf{Avg.} $\uparrow$ \\
\midrule
64   & 13.22 & 8.92 & 72.31 & 64.60 & 30.55 & 70.06 & 45.61 & 64.01 & 57.86 \\
128  & 17.02 & 8.85 & 72.74 & 65.49 & 30.97 & 69.88 & 45.38 & 63.54 & 58.00 \\
256  & 25.53 & 8.63 & 74.10 & 64.44 & 33.28 & 71.13 & 45.51 & 64.17 & 58.77 \\
512  & 41.78 & 8.54 & 73.34 & 65.78 & 32.00 & 71.13 & 45.89 & 63.30 & 58.57 \\
1024 & 74.58 & 8.76 & 72.63 & 65.40 & 30.80 & 69.82 & 45.67 & 64.09 & 58.07 \\
\bottomrule
\end{tabular}}
\end{table*}

\section{Profiling Study}
\subsection{Energy Efficiency and Resource Utilization}
\label{sec:energy_utilization}

To evaluate the environmental impact and deployment cost of FluxBin, we profile power consumption and hardware utilization on an NVIDIA A100 GPU using \texttt{nvidia-smi} at 100ms sampling intervals during steady-state decode generation (excluding the first 10s of warmup). Total energy is computed as $E = \sum_t P(t) \cdot \Delta t$. Table~\ref{tab:GPU-profile} summarizes the results across the LLaMA-2 family, with input and output length 128, batch size 1.

\textbf{Energy Consumption.} 
FluxBin demonstrates a dramatic reduction in total energy cost (Joules). For LLaMA-2-13B, FluxBin reduces total energy consumption by over {10.19$\times$} compared to the FP16 baseline (579.23J vs. 5908.03J). This massive efficiency gain stems from two factors: \textbf{(i)} the significant reduction in inference latency (as discussed in Sec.~\ref{sec:speed_acc}), which reduces the total active time of the GPU; and \textbf{(ii)} lower average power draw ($\sim$115W vs. 225W). The reduced power draw is attributed to the minimized data movement overhead—since memory access energy typically dominates arithmetic energy—and the use of lightweight bitwise/integer operations instead of power-hungry FP16 arithmetic.

\textbf{Hardware Utilization.}
A notable observation is the {GPU busy time} reported by \texttt{nvidia-smi}, where FluxBin consistently stays near $>$99.8\%, whereas the FP16 baseline hovers around 67\%--86\%. We stress that this is a device-busy fraction, not a measure of arithmetic-unit saturation; read together with the very low memory utilization below, it indicates that the LUT kernel keeps the SMs continuously occupied and is not stalled waiting on HBM traffic, unlike the FP16 baseline.
Simultaneously, the {memory-bandwidth utilization} remains low (e.g., $\sim$6\% for 7B). This is not a performance target in itself; it reflects the small weight traffic of the compressed 2-bit format and indicates that the kernel is limited by on-chip LUT/compute throughput rather than HBM bandwidth. For LLaMA-2-70B, where FP16 fails due to OOM, FluxBin operates efficiently with high GPU utilization and stable power consumption ($\sim$155W), further validating its scalability for massive model deployment.
\begin{table*}[!htbp]
\centering
\small
\caption{Power, energy, and resource utilization comparison on LLaMA-2 models.
Lower power and energy are better ($\downarrow$); GPU and memory utilization are reported as diagnostics of the compute/bandwidth balance rather than as quality targets.}
\label{tab:GPU-profile}
\setlength{\tabcolsep}{6pt}
\renewcommand{\arraystretch}{1.1}
\resizebox{\textwidth}{!}{\begin{tabular}{l l c c c c}
\toprule
\textbf{Model} & \textbf{Quantization} & \textbf{Avg Power (W)} $\downarrow$ &
\textbf{Total Energy (J)} $\downarrow$ &
\textbf{Avg GPU Util. (\%)} &
\textbf{Avg Mem. Util. (\%)} \\
\midrule
\multirow{5}{*}{LLaMA-2-7B}
& FP16                & 168.66 & 3467.69           & 67.01 & 18.09 \\
& FluxBin-2b-g128     & 106.75 & 462.98  & 99.83 & 5.55  \\
& FluxBin-2b-s8-g128  & 110.80 & 425.05   & 99.81 & 6.27  \\
& FluxBin-2b-s16-g128 & 107.40 & 434.34   & 99.87 & 6.27  \\
& FluxBin-4b-g128     & 109.40 & 401.16   & 99.85 & 7.65  \\
\midrule
\multirow{5}{*}{LLaMA-2-13B}
& FP16                & 225.49 & 5908.03           & 86.83 & 33.16 \\
& FluxBin-2b-g128     & 113.86 & 579.23   & 99.89 & 8.15  \\
& FluxBin-2b-s8-g128  & 115.82 & 603.46    & 99.98 & 9.69  \\
& FluxBin-2b-s16-g128 & 115.07 & 644.74    & 99.93 & 9.69  \\
& FluxBin-4b-g128     & 112.57 & 560.84   & 99.89 & 12.45 \\
\midrule
\multirow{5}{*}{LLaMA-2-70B}
& FP16                & OOM    & OOM                & OOM   & OOM   \\
& FluxBin-2b-g128     & 150.22 & 2522.47            & 99.95 & 26.54 \\
& FluxBin-2b-s8-g128  & 157.08 & 2441.06            & 99.98 & 34.19 \\
& FluxBin-2b-s16-g128 & 154.56 & 2560.80            & 99.98 & 34.19 \\
& FluxBin-4b-g128     & 159.68 & 2477.17            & 99.96 & 49.14 \\
\bottomrule
\end{tabular}}
\end{table*}

\subsection{Shared Memory Footprint Analysis}
\label{sec:shared_mem_analysis}

Efficient utilization of Shared Memory (SRAM) is critical for LUT-based kernels to maximize occupancy and hide memory latency. We analyze the shared memory consumption for a single thread block based on our default configuration ($M_{\text{tile}}=128, K_{\text{tile}}=64$).

The total shared memory usage $S_{\text{total}}$ consists of two components: the storage for Look-Up Tables ($S_{\text{LUT}}$) and the intermediate accumulation buffer ($S_{\text{acc}}$).

\textbf{LUT Storage ($S_{\text{LUT}}$).} 
Since the input vector $\mathbf{X}$ varies across the batch, and the column scaling factor $\boldsymbol{\alpha}_c$ is fused into the LUT, independent LUTs must be constructed for each batch index within the block. For a sub-vector size $\mu=8$ and $K_b$ binary bases, the LUT storage is calculated as:
\begin{equation}
S_{\text{LUT}} = B_{\text{tile}} \cdot K_b \cdot \frac{K_{\text{tile}}}{\mu} \cdot 2^\mu \cdot \text{sizeof(float)}
\end{equation}
where $B_{\text{tile}}$ denotes the batch size handled per block. The term $\frac{K_{\text{tile}}}{\mu}$ represents the number of sub-vectors per tile.

\textbf{Accumulation Buffer ($S_{\text{acc}}$).} 
To perform parallel reduction and row-scale post-correction, we maintain a buffer to store partial sums for each row and each binary base before the final reduction.
\begin{equation}
S_{\text{acc}} = B_{\text{tile}} \cdot K_b \cdot M_{\text{tile}} \cdot \text{sizeof(float)}
\end{equation}

\textbf{Impact of Sub-vector Size ($\mu$) on Resource Trade-offs.}
Table~\ref{tab:mu-profile} presents a comparative analysis of shared memory footprint, throughput, and energy efficiency between sub-vector sizes $\mu=4$ and $\mu=8$.

\textbf{(i)}{ Shared Memory Occupancy:} 
Consistent with our theoretical derivation, $\mu=4$ maintains an extremely low footprint ($<$24 KB), imposing negligible pressure on SRAM. In contrast, $\mu=8$ exhibits significantly higher consumption due to the exponential growth of LUT size ($2^8$ vs $2^4$). Specifically, at Batch Size (BS) 8, $\mu=8$ consumes 136 KB. Crucially, this remains within the 164 KB capacity limit per SM on the NVIDIA A100, ensuring that the larger LUT configuration does not cause shared memory overflow or prevent kernel execution.
\begin{table*}[t]
\centering
\caption{Impact of sub-vectors size $\mu$ on block-level shared memory usage and end-to-end generation performance of FluxBin-2b-g128 LLaMA-2-7B.
Higher throughput is better ($\uparrow$), while lower power and energy are better ($\downarrow$).}
\label{tab:mu-profile}
\setlength{\tabcolsep}{6pt}
\renewcommand{\arraystretch}{1.1}
\begin{tabular}{c c c c c c}
\toprule
\textbf{$\mu$} & \textbf{BS} & \textbf{Shared Mem. (KB)} &
\textbf{Throughput} $\uparrow$ &
\textbf{Power (W)} $\downarrow$ &
\textbf{Energy (J)} $\downarrow$ \\
\midrule
\multirow{3}{*}{4}
 & 1 & 3   & 213.97 & 107.13 & 397.15 \\
 & 4 & 12  & 656.50 & 146.97 & 802.74 \\
 & 8 & 24  & 952.86 & 172.31 & 1301.12 \\
\midrule
\multirow{3}{*}{8}
 & 1 & 17  & 213.96 & 105.41 & 451.18 \\
 & 4 & 68  & 655.58 & 140.04 & 792.82 \\
 & 8 & 136 & 953.04 & 169.48 & 1220.45 \\
\bottomrule
\end{tabular}
\end{table*}

\textbf{(ii)}{ Performance and Selection Strategy:} 
The end-to-end throughput remains comparable between $\mu=4$ and $\mu=8$, suggesting it is not a shared memory bound regime. However, we select $\mu=8$ as the default configuration driven by {energy efficiency}.
According to our complexity analysis (Eq.~\ref{equa:complexity-full}), the number of sub-vectors within a fixed $K$-tile is inversely proportional to $\mu$ (sub-vectors number = ${K_{\text{tile}}}/{\mu}$). Consequently, decreasing $\mu$ from 8 to 4 {doubles} the number of sub-vectors. This necessitates a $2\times$ increase in the frequency of LUT read operations, which is shown to dominate the complexity in Sec.~\ref{sec:appendix-complexity}, and the associated accumulation within the block. As empirically evidenced in Table~\ref{tab:mu-profile}, this intensified operational frequency directly translates to higher power consumption (e.g., an increase of $\sim$3W at BS=8), making $\mu=8$ the superior choice for energy-efficient inference.

\subsection{Tiling Strategy and TFLOPS Comparison}
\label{sec:kernel-tile}
To maximize the utilization of GPU compute units and memory bandwidth, we conduct a sensitivity analysis on the matrix tiling parameters. Specifically, we investigate the tile size along the output channel dimension ($M_{\text{tile}}$), the input channel dimension ($K_{\text{tile}}$), and the salient refinement block size ($M_{s\_tile}$).

\textbf{Performance Metric.} 
We evaluate the kernel throughput using {Effective TFLOPS} (Tera Floating-point Operations Per Second). Since FluxBin utilizes integer-based LUT logic rather than floating-point arithmetic, this metric represents the equivalent computational throughput required to perform the same matrix-vector multiplication in FP16.
For hybrid configurations, we explicitly account for the additional workload introduced by the salient refinement branch. The Effective TFLOPS is calculated as:
\begin{equation}
\text{TFLOPS} = \frac{2 \cdot M \cdot (N + N_{sal})}{\text{Latency (s)} \times 10^{12}}
\label{eq:effective_tflops}
\end{equation}
where $M$ and $N$ denote the output and input channel dimensions, respectively, and $N_{sal}$ is the number of salient columns. For standard global base kernels (e.g., 2b-g128), we simply set $N_{sal} = 0$. %

\textbf{Optimization of Global Base Tiling.} 
Table~\ref{tab:tile-dense} presents the performance of pure binary kernels (2-bit and 4-bit) under various $(M_{\text{tile}}, K_{\text{tile}})$ combinations.
\begin{itemize}
    \item \textbf{Impact of $M_{\text{tile}}$:} Increasing $M_{\text{tile}}$ yields the most significant performance gain. For instance, with the 2-bit kernel at $K=4096$, expanding $M_{\text{tile}}$ from 32 to 128 increases throughput from 0.30 TFLOPS to 0.81 TFLOPS. A larger $M_{\text{tile}}$ increases the number of threads per block, effectively hiding instruction latency and amortizing the overhead of LUT construction.
    \item \textbf{Impact of $K_{\text{tile}}$:} The choice of $K_{\text{tile}}$ involves a trade-off between reduction efficiency and shared memory occupancy. While larger $K_{\text{tile}}$ reduces the number of global atomic reductions, it consumes more shared memory for LUT storage. Our empirical results suggest that $K_{\text{tile}}=32$ or $64$ typically offers the optimal balance.
\end{itemize}

\textbf{Optimization of Mixed-Precision Tiling.}
Table~\ref{tab:tile-saliency} extends the analysis to hybrid bases configurations (2b-s8/s16-g128). We observe that the mixed-precision kernel exhibits tiling sensitivities similar to the global base kernel, with $M_{\text{tile}}=128$ consistently delivering the best performance. Regarding the salient-specific tile size $M_{s\_tile}$, setting it to 64 provides a slight latency reduction compared to 16, as it promotes better memory coalescing for the dense salient matrix.
Notably, the performance gap between {s8} and {s16} configurations is marginal across all tiling setups. This confirms the efficiency of our Virtual Columnar Mapping (VCM) strategy, which ensures that scaling up the salient branch incurs minimal computational penalties.

Based on these findings, we adopt $(M_{\text{tile}}=128, K_{\text{tile}}=64, M_{s\_tile}=64)$ as default for all end-to-end evaluations.

\begin{table*}[!htbp]
\centering
\caption{Effect of tile sizes on FluxBin kernel performance.}
\label{tab:tile-dense}
\setlength{\tabcolsep}{8pt} 
\renewcommand{\arraystretch}{1.2}
\sisetup{table-format=3.1} 

\resizebox{\textwidth}{!}{\begin{tabular}{c c c c S S[table-format=1.2] S S[table-format=1.2]}
\toprule
\multicolumn{4}{c}{} &
\multicolumn{2}{c}{\textbf{FluxBin-2b-g128}} &
\multicolumn{2}{c}{\textbf{FluxBin-4b-g128}} \\
\cmidrule(lr){5-6} \cmidrule(lr){7-8}
\textbf{M} & \textbf{K} & \boldmath$M_{\text{tile}}$ & \boldmath$K_{\text{tile}}$ &
{\textbf{Latency (\textmu s)} $\downarrow$} & {\textbf{TFLOPS} $\uparrow$} &
{\textbf{Latency (\textmu s)} $\downarrow$} & {\textbf{TFLOPS} $\uparrow$} \\
\midrule
\multirow{9}{*}{4096} & \multirow{9}{*}{4096} 
  & 32  & 32  & 113.3 & 0.30 & 109.6 & 0.31 \\
& & 32  & 64  & 143.0 & 0.23 & 108.4 & 0.31 \\
& & 32  & 128 & 228.9 & 0.15 & 108.4 & 0.31 \\
\cmidrule{3-8}
& & 64  & 32  & 58.0  & 0.58 & 72.9  & 0.46 \\
& & 64  & 64  & 67.6  & 0.50 & 66.9  & 0.50 \\
& & 64  & 128 & 84.6  & 0.40 & 66.8  & 0.50 \\
\cmidrule{3-8}
& & 128 & 32  & 41.2  & \textbf{0.81} & 57.8  & 0.58 \\
& & 128 & 64  & 41.4  & \textbf{0.81} & 56.3  & \textbf{0.60} \\
& & 128 & 128 & 47.5  & 0.71 & 56.4  & \textbf{0.60} \\
\midrule 
\multirow{9}{*}{4096} & \multirow{9}{*}{11008} 
  & 32  & 32  & 289.2 & 0.31 & 296.4 & 0.30 \\
& & 32  & 64  & 379.9 & 0.24 & 307.0 & 0.29 \\
& & 32  & 128 & 594.5 & 0.15 & 308.7 & 0.29 \\
\cmidrule{3-8}
& & 64  & 32  & 148.5 & 0.61 & 215.1 & 0.42 \\
& & 64  & 64  & 166.3 & 0.54 & 200.9 & 0.45 \\
& & 64  & 128 & 208.8 & 0.43 & 201.0 & 0.45 \\
\cmidrule{3-8}
& & 128 & 32  & 102.8 & \textbf{0.88} & 158.9 & \textbf{0.57} \\
& & 128 & 64  & 103.5 & 0.87 & 181.3 & 0.50 \\
& & 128 & 128 & 114.8 & 0.79 & 182.1 & 0.50 \\
\bottomrule
\end{tabular}}
\end{table*}

\begin{table*}[!htbp]
\centering
\caption{Effect of tile sizes on FluxBin kernel performance on hybrid bases.}
\label{tab:tile-saliency}
\setlength{\tabcolsep}{5pt}
\renewcommand{\arraystretch}{1.15}
\sisetup{table-format=3.1}

\resizebox{\textwidth}{!}{\begin{tabular}{c c c c c S S[table-format=1.2] S S[table-format=1.2]}
\toprule
\multicolumn{5}{c}{} &
\multicolumn{2}{c}{\textbf{FluxBin-2b-s8-g128}} &
\multicolumn{2}{c}{\textbf{FluxBin-2b-s16-g128}} \\
\cmidrule(lr){6-7} \cmidrule(lr){8-9}
\textbf{M} & \textbf{K} & \boldmath$K_{\text{tile}}$ & \boldmath$M_{\text{tile}}$ & \boldmath$M_{s\_tile}$ &
{\textbf{Lat (\textmu s)} $\downarrow$} & {\textbf{TFLOPS} $\uparrow$} &
{\textbf{Lat (\textmu s)} $\downarrow$} & {\textbf{TFLOPS} $\uparrow$} \\
\midrule

\multirow{18}{*}{4096} & \multirow{18}{*}{4096} &
\multirow{9}{*}{32} & \multirow{3}{*}{32} & 16 & 121.0 & 0.29 & 124.4 & 0.28 \\
& & & & 32 & 119.5 & 0.29 & 123.2 & 0.28 \\
& & & & 64 & 118.3 & 0.29 & 121.9 & 0.28 \\
\cmidrule{4-9}
& & & \multirow{3}{*}{64} & 16 & 67.3 & 0.51 & 69.0 & 0.50 \\
& & & & 32 & 65.7 & 0.53 & 67.9 & 0.51 \\
& & & & 64 & 64.7 & 0.53 & 66.5 & 0.52 \\
\cmidrule{4-9}
& & & \multirow{3}{*}{128} & 16 & 51.5 & 0.67 & 53.5 & 0.65 \\
& & & & 32 & 50.0 & 0.69 & 52.1 & 0.66 \\
& & & & 64 & 48.6 & \textbf{0.71} & 50.9 & \textbf{0.68} \\
\cmidrule{3-9} 
& & \multirow{9}{*}{64} & \multirow{3}{*}{32} & 16 & 157.7 & 0.22 & 158.7 & 0.22 \\
& & & & 32 & 154.4 & 0.22 & 156.5 & 0.22 \\
& & & & 64 & 155.1 & 0.22 & 156.6 & 0.22 \\
\cmidrule{4-9}
& & & \multirow{3}{*}{64} & 16 & 76.3 & 0.45 & 78.2 & 0.44 \\
& & & & 32 & 74.6 & 0.46 & 77.0 & 0.45 \\
& & & & 64 & 73.7 & 0.47 & 75.6 & 0.46 \\
\cmidrule{4-9}
& & & \multirow{3}{*}{128} & 16 & 49.8 & 0.69 & 51.7 & 0.67 \\
& & & & 32 & 48.3 & 0.72 & 50.5 & 0.69 \\
& & & & 64 & 47.1 & \textbf{0.74} & 49.2 & \textbf{0.70} \\
\midrule

\multirow{18}{*}{4096} & \multirow{18}{*}{11008} &
\multirow{9}{*}{32} & \multirow{3}{*}{32} & 16 & 297.8 & 0.31 & 304.2 & 0.30 \\
& & & & 32 & 296.5 & 0.31 & 303.1 & 0.30 \\
& & & & 64 & 295.3 & 0.31 & 301.3 & 0.30 \\
\cmidrule{4-9}
& & & \multirow{3}{*}{64} & 16 & 159.4 & 0.57 & 161.1 & 0.57 \\
& & & & 32 & 157.9 & 0.58 & 159.8 & 0.57 \\
& & & & 64 & 156.8 & 0.58 & 157.9 & 0.58 \\
\cmidrule{4-9}
& & & \multirow{3}{*}{128} & 16 & 122.4 & 0.75 & 124.1 & 0.73 \\
& & & & 32 & 120.9 & 0.75 & 122.6 & 0.74 \\
& & & & 64 & 119.9 & \textbf{0.76} & 121.4 & \textbf{0.75} \\
\cmidrule{3-9}
& & \multirow{9}{*}{64} & \multirow{3}{*}{32} & 16 & 388.1 & 0.24 & 402.0 & 0.23 \\
& & & & 32 & 386.5 & 0.24 & 405.1 & 0.23 \\
& & & & 64 & 385.2 & 0.24 & 401.5 & 0.23 \\
\cmidrule{4-9}
& & & \multirow{3}{*}{64} & 16 & 172.6 & 0.53 & 174.8 & 0.52 \\
& & & & 32 & 171.1 & 0.53 & 173.4 & 0.53 \\
& & & & 64 & 170.1 & 0.54 & 172.2 & 0.53 \\
\cmidrule{4-9}
& & & \multirow{3}{*}{128} & 16 & 112.1 & 0.81 & 114.1 & 0.80 \\
& & & & 32 & 110.7 & 0.82 & 112.6 & 0.81 \\
& & & & 64 & 109.7 & \textbf{0.83} & 111.4 & \textbf{0.82} \\
\bottomrule
\end{tabular}}
\end{table*}

\subsection{Kernel-Level Latency Comparison}
\label{sec:kernel-latency}
To isolate the kernel contribution from the end-to-end pipeline, we further compare the per-operator GEMV latency of FluxBin against the leading 2-bit weight-only baselines, AQLM and QuIP\#, across the weight shapes that dominate LLM linear layers. Each kernel is timed in isolation using its official implementation. As reported in Table~\ref{tab:kernel-latency}, FluxBin's LUT-based kernel is consistently the fastest at the pure 2-bit setting, e.g. 41\,$\mu$s at $4096\times4096$ versus 112\,$\mu$s for QuIP\# and 79\,$\mu$s for AQLM. This gap reflects the reconstruction cost of the competing schemes: QuIP\# decodes an E8 lattice and AQLM performs a multi-codebook gather per weight, whereas FluxBin reconstructs each weight with a single foldable table lookup plus addition. Crucially, the hybrid-precision configurations (2b-s8/s16-g128) add only a marginal overhead over the pure 2-bit kernel, confirming that the Virtual Columnar Mapping keeps the salient branch nearly free; and even the 4-bit kernel remains faster than the 2-bit baselines at most shapes.

\begin{table*}[!htbp]
    \centering
    \small
    \caption{Per-operator GEMV kernel latency ($\mu$s) across weight matrix sizes ($M\times N$), measured for the 2-bit quantized baselines and FluxBin. Lower is better ($\downarrow$); the best result in each row is in \textbf{bold}. Each kernel is measured in isolation with its official implementation.}
    \label{tab:kernel-latency}
    \setlength{\tabcolsep}{6pt}
    \renewcommand{\arraystretch}{1.15}
    \resizebox{\textwidth}{!}{\begin{tabular}{c cc cccc}
    \toprule
        \multirow{2}{*}{\textbf{Weight Size} ($M\times N$)}
        & \multicolumn{2}{c}{\textbf{Baselines}}
        & \multicolumn{4}{c}{\textbf{FluxBin}} \\
    \cmidrule(lr){2-3} \cmidrule(lr){4-7}
        & \textbf{AQLM 2bit}
        & \textbf{QuIP\# 2bit}
        & \textbf{2b-g128}
        & \textbf{2b-s8-g128}
        & \textbf{2b-s16-g128}
        & \textbf{4b-g128} \\
    \midrule
        4096$\times$4096   & 79  & 112 & \textbf{41}  & 47  & 49  & 56  \\
        5120$\times$5120   & 111 & 174 & \textbf{63}  & 73  & 74  & 87  \\
        4096$\times$11008  & 169 & 272 & \textbf{102} & 109 & 111 & 158 \\
        5120$\times$13824  & 277 & 411 & 167 & \textbf{166} & 169 & 252 \\
    \bottomrule
    \end{tabular}}
\end{table*}

\end{document}